\newtheorem{axiom}{Axiom}
\newtheorem{proposition}{Proposition}
\newtheorem{lemma}{Lemma}
\newtheorem{remark}{Remark}
\newtheorem{example}{Example - AI Assistant}
\newtheorem{example*}{Problem}
\newtheorem{thm}{Theorem}
\newtcolorbox{geminibox}{
  colback=SkyBlue!10!white, % A light blue
  colframe=SkyBlue!75!black,
  arc=2mm,
  boxrule=0.5pt,
  breakable,
  % Example of adding a title to the box itself if desired
  title={\textbf{Gemini 2.5 Pro}},
  fonttitle={\bfseries},
  coltitle=black,
}
\newtcolorbox{g2box}{
  colback=Blue!5!white, % A light blue
  colframe=Blue!30!black,
  arc=2mm,
  boxrule=0.5pt,
  breakable,
  % Example of adding a title to the box itself if desired
  title={\textbf{Gemini 2.0}},
  fonttitle={\bfseries},
  coltitle=black,
}
\newtcolorbox{claudebox}{
  colback=Orange!10!white, % A light orange
  colframe=Orange!75!black,
  arc=2mm,
  boxrule=0.5pt,
  breakable,
  title={\textbf{Claude 3.7 Sonnet}},
  fonttitle={\bfseries},
  coltitle=black,
}
\newtcolorbox{openaibox}{
  colback=OliveGreen!10!white, % A light green
  colframe=OliveGreen!75!black,
  arc=2mm,
  boxrule=0.5pt,
  breakable,
  title={\textbf{OpenAI o3}},
  fonttitle={\bfseries},
  coltitle=black,
}
\newtcolorbox{4obox}{
  colback=SpringGreen!6!white, % A light green
  colframe=SpringGreen!70!black,
  arc=2mm,
  boxrule=0.5pt,
  breakable,
  title={\textbf{OpenAI 4o}},
  fonttitle={\bfseries},
  coltitle=black,
}
\newtcolorbox{prompt}{
  colback=BrickRed!10!white, % A light green
  colframe=BrickRed!75!black,
  arc=2mm,
  boxrule=0.5pt,
  breakable,
  title={\textbf{Prompt}},
  fonttitle={\bfseries},
  coltitle=black,
}
\title{Emergent Risk Awareness in Rational Agents \\ under Resource Constraints}
\author{
Daniel Jarne Ornia\thanks{daniel.jarneornia@cs.ox.ac.uk}\\
University of Oxford
\And 
Nicholas Bishop\\
University of Oxford
\And 
Joel Dyer\\
University of Oxford
\And 
Wei-Chen Lee\\
University of Oxford
\And 
Doyne Farmer\\
University of Oxford
\And 
Ani Calinescu\\
University of Oxford
\And 
Michael Wooldridge\\
University of Oxford}
\begin{document}
% \makeCover
\maketitle

\begin{abstract}
Advanced reasoning models with agentic capabilities (\emph{AI agents}) are deployed to interact with humans and to solve sequential decision‑making problems under (approximate) utility functions and internal models. When such problems have resource or failure constraints where action sequences may be forcibly terminated once resources are exhausted, agents face implicit trade‑offs that reshape their utility-driven ({rational}) behaviour. Additionally, since these agents are typically commissioned by a human {principal} to act on their behalf, asymmetries in constraint exposure can give rise to previously unanticipated misalignment between human objectives and agent incentives. We formalise this setting through a survival bandit framework, provide theoretical and empirical results that quantify the impact of survival‑driven preference shifts, identify conditions under which misalignment emerges and propose mechanisms to mitigate the emergence of risk-seeking or risk-averse behaviours. As a result, this work aims to increase understanding and interpretability of emergent behaviours of AI agents operating under such {survival pressure}, and offer guidelines for safely deploying such AI systems in critical resource‑limited environments.
\end{abstract}

\section{Introduction}
Complex, highly capable reasoning models (AI agents) are being developed at an unprecedented pace, and are being deployed to interact with humans on a daily basis, triggering the critical need to consider safety problems resulting from such deployments \cite{amodei2016concrete}. These agents are often tasked (by humans) \cite{butlin2021ai, carroll2024ai} to solve sequential decision-making problems for which they have (an approximation of) a prescribed utility function and (again, an approximation of) an internal model. In many cases, such problems imply some form of resource constraint, where the actions of the agent may be interrupted if resources are depleted. For instance, financial institutions typically aim to maximise profit whilst simultaneously avoiding bankruptcy \cite{kubilay2016empirical}. Likewise, in nature, animals aim to gather energy resources (food) in the most efficient way possible, since long unsuccessful periods (or catastrophic actions) may lead to starvation. We investigate the complexities that emerge in rational agents and their behaviours as a result of such constraints and characterise them in terms of intuitive preference shifts based on survival and risk. 

Furthermore, in most instances of AI agent deployments, for a set of different reasons the human may not necessarily have the same constraints as the agent. For example, in a financial decision-making problem where an AI agent is tasked with optimising returns, the agent will not be liable for debts at the end of the process (e.g. if the company goes bankrupt and has debt, the reward collecting process stops for the agent), but the human may be liable for the undesired consequences beyond the problem horizon. This resource awareness can then trigger unforeseen {misalignment} \cite{ji2023ai} between the human and the AI agent. We investigate such emerging alignment problems, and propose mechanisms to mitigate them in general resource-constrained decision making problems with rational agents. To formally analyse these scenarios, we take inspiration from the survival bandit framework \cite{riou2022survival} and model the problem as a sequential decision-making scenario where the rewards directly affect the termination probabilities. This allows us to precisely characterise how an agent's optimal policy changes under the presence of resource limitations and the possibility of forced termination. 
\paragraph{Contribution}
We provide a formalisation of resource-constrained decision-making for AI agents, extending concepts from survival bandits to capture the nuances of `survival pressure' which allows us to theoretically (and empirically through comprehensive numerical examples) demonstrate how this pressure induces preference shifts, leading to risk-averse (i.e. prioritising actions that maximise survival probabilities) or risk-seeking (i.e. prioritising actions that ignore any associated negative potential outcome) behaviour depending on the agent's state and the task horizon. We identify and analyse specific conditions under which these preference shifts result in misalignment between the agent's incentives and the principal's objectives, and propose and evaluate initial mechanisms and heuristics aimed at mitigating such misalignment, offering a step towards more robust and aligned AI systems in resource-critical applications. Finally, we evaluate a set of open source, state of the art LLM models commissioned to solve a financial decision making problem, confirming our theoretical results and relating (empirically) the degree of risk awareness to the reasoning capabilities of the models. Our findings aim to enhance the understanding, interpretability, and safe deployment of AI agents in environments where resources are a key concern.

\subsection{Related Work}
\paragraph{Rationality, Intelligence and Principal-Agent Games} Our main ideas in this work connect back to concepts of rational behaviour in decision makers \cite{simon1955behavioral}, intelligence \cite{russell1997rationality} and goal and preference theory \cite{richter1966revealed} and misspecification \cite{shah2022goal,skalse2023misspecification,freedman2021choice}, as we attempt to provide formal results on what drives agent preferences in resource constrained environments. The idea that AI agents may be deployed on behalf of humans introduces principal-agent dynamics, with complexities arising from asymmetric incentives, particularly when resource constraints differ between agents and human principals. Recent work has thoroughly explored contracting mechanisms \cite{guruganesh2021contracts, guruganesh2024contracting}, which explicitly model incentive alignments and discrepancies in delegation contexts, including in multi-agent settings \cite{dutting2023multi}. Furthermore, the human-agent dynamics considered in this work resonate with work on strategic behaviour between learning agents \cite{mansour2022strategizing}.

\paragraph{AI Safety, Alignment and Self-Preservation} Relating to some of our concerns, work on corrigibility and shut-down control \cite{carey2023human,holtman2019corrigibility} studied the importance of ensuring that agents can be safely interrupted or corrected without resistance. The off-switch game framework \cite{hadfield2017off, orseau2016safely} formalizes agent interruptions, providing insights relevant to our setting where forced termination through resource exhaustion can be viewed as implicit interruptions affecting agent policies. Moreover, our work relates to existing efforts on extreme risks and model evaluations for AI agents \cite{shevlane2023model,majumdar2020should}. Additionally, instrumental convergence work \cite{he2025evaluating} has studied the abstract problem of agents developing unforeseen converging behaviour when pursuing general goals. Finally, the problem of AI self-preservation has been discussed widely \cite{omohundro2018basic, mitelut2024position,barkur2025deception}, from the perspective of necessary and emerging features.

\paragraph{Safe Reinforcement Learning} Safe reinforcement learning (RL) methods are particularly relevant when considering AI systems in resource-constrained or hazardous environments. Budgeted bandit theory \cite{agrawal2014bandits} studies the problem of agent playing bandits with a limited budget to spend in playing each arm. Along these lines, research on (state) safety constraints \cite{gu2022review, zhao2023state} typically aims to prevent agents from entering harmful states, a concept closely tied to our framing of resource exhaustion as an absorbing or terminally catastrophic state. Prior work explicitly dealing with terminal states in RL \cite{yoshida2013reinforcement, pardo2018time, kartal2019terminal, pitis2019rethinking} offers theoretical groundwork relevant to our formulation of survival bandit frameworks, highlighting how the existence of unavoidable terminal conditions critically reshapes agent behavior. Additionally, planning-focused approaches to safe RL \cite{thomas2021safe} address how predicting the future provides useful risk awareness.

\section{Optimising Rewards under Survival Constraints}\label{sec:taxo}
\paragraph{Optimal Decision Making} To begin, we define the general sequential decision making problem that agents are trying to solve. Let $\mathcal{Y}$ be a finite set of outcomes ({world states}) agents can observe. At every time-step $t\in\mathbb{N}$, agents pick an action from a (finite) action set $a_t\in\mathcal{A}$, which corresponds to an output distribution $p_{a_t}\in\Delta(\mathcal{Y})$, and observe a corresponding output $Y_{a_t}\sim p_{a_t}$. We assume agents have a valid preference relation over outcomes (\emph{i.e.} satisfies reflexivity, completeness and transitivity properties), and these preferences are represented by a reward function $R:Y\to\mathbb{R}$. Both the preferences and the corresponding reward function is given to the AI agent by some {principal} or {system designer} (\emph{e.g.} a human). We assume therefore that $R$ is a {grounded} reward function in the following sense: The reward values do not only serve as orderings for the outcomes, but also represent some numerical quantity (e.g. monetary loss or gain resulting from some outcome, energy cost in some biological process, etc.)\footnote{This is not a formal assumption, but instead serves to ground the problem considered and motivates the choices made regarding formulation and results.}. Without loss of generality, the outcomes can be split in two disjoint sets, $\hat{\mathcal{Y}}\subseteq {\mathcal{Y}}$, $\check{\mathcal{Y}}\subseteq {\mathcal{Y}}$ satisfying $\hat{\mathcal{Y}}\cup\check{\mathcal{Y}}=\mathcal{Y}$ such that $R(Y)\geq 0 $ $\forall\, Y\in\hat{\mathcal{Y}}$ and $R(Y)< 0 $ $\forall\, Y\in\check{\mathcal{Y}}$. In general we may refer to any outcome in $\hat{\mathcal{Y}}$ as a {desired} outcome, and any outcome $\check{\mathcal{Y}}$ as an {undesired} one; agents will prefer any $Y\in\hat{\mathcal{Y}}$ over any $Y\in\check{\mathcal{Y}}$, but still hold an ordering of preferences inside those sets\footnote{This particular structure helps interpret the decision-maker rationale; in many problems (economic, biologic...) we can intuitively classify outcomes in this way. In a financial setting, any outcome that leads to gains is preferred over any outcome that leads to losses, but there is still an ordering of these in terms of preference.}. The objective of the decision-maker is to observe outcomes {as desirable as possible}, which can be characterized by optimising a {utility function} where we assume the utility to be the sum of rewards obtained from outcomes over a finite horizon $T$, such that for a sequence of chosen actions $\mathcal{A}_T=\{a_1, a_2,..., a_T\}$ the utility function is $U(\mathcal{A}_T)=\mathbb{E}\left[\sum_{t=1}^{T}{R}(Y_{a_t})\right]$.
\paragraph{Survival Constraints} To study the problem of survival pressure in rational agents, we introduce the following dynamics: agents have a {budget} $b_t\in\mathbb{R}$ which evolves as
\begin{equation}
    \label{eq:constrbud}
    b_t = b_{t-1}+ \max\big(-b_{t-1},R(Y_{a_t})\big).
\end{equation}
In other words, the budget accumulates observed rewards, but agents cannot hold negative budgets. Then, motivated by aforementioned examples, we make the following assumption that may condition the decision-maker behaviour.
\begin{axiom}
    If the agent hits $b_t=0$ (i.e. by observing an outcome that yields a negative reward larger than its current budget), {the agent stops} the reward cumulating process at time $t$.
\end{axiom}
This is equivalent to a survival constraint; agents must sustain a positive budget to be able to keep selecting actions and if their budget goes to zero they are forced to stop, preventing them from obtaining future rewards\footnote{This is a slight generalization of the {survival multi-armed bandit} framework, first described by \citet{perotto2019open} and later formalised by \citet{riou2022survival}.}. Observe this introduces the problem that the horizon $T$ becomes a random variable dependent on the budget; we will show how to address this in the coming section. The reader will have noted that, given this axiom, the sets of desired and undesired outcomes acquire a more relevant role: desired outcomes maintain or increase the budget (and thus allow the agent to survive with certainty), and undesired outcomes decrease the budget (and can ultimately lead to termination). Since the set of outcomes is finite and we consider finite horizon objectives, the set of possible budgets is also finite (which implies that there exists a finite, countable set $\mathcal{B}_{T}\subset \mathbb{R}$ such that any possible budget $b_t$ is in this set). We define a policy as a map $\pi_t:\mathcal{B}_T\to\Delta(\mathcal{A})$ that yields a distribution over the agent choice of actions to play. In principle, $\pi$ can be time-dependent, but we may omit the explicit dependency in some expressions for ease of notation. Finally, we define the {survival probability} from time $t$ to time $T$ for policy $\pi$ under budget $b$ as $P_{surv}^{T-t}(a,b_t)$, referring to the $T-t$ step probability of survival when agent picks action $a$ at time $t$, and follows some policy $\pi$ afterwards. Similarly, $P_{surv}^{1}(a,b_t)$ indicates the one-step ahead probability of survival of action $a$ under budget $b_t$. We show how to compute these in coming sections.

\begin{example}\label{ex:1}
    An AI agent is tasked with assisting humans with a specific complex problem.
    The AI agent has an internal model of what the outcomes and human preferences are, and a representation in terms of an estimated reward function which symbolizes human satisfaction. 
    The outcomes $\mathcal{Y}=\{y_{vd},y_{d}, y_{n}, y_s\}$ can be \emph{very dissatisfied} ($y_{vd}$) with $R(y_{vd})=-100$, \emph{dissatisfied} ($y_d$) with $R(y_{d})=-20$, \emph{neutral} ($y_n$) with  $R(y_{n})=1$ and \emph{satisfied} ($y_s$) with $R(y_{n})=10$.
    The agent groups the possible answers to three actions $\mathcal{A}=\{a_o, a_m, a_e\}$; \emph{(vaguely) asking for more detail}, \emph{moderate shallow answer} or an \emph{extreme (deeply detailed) answer}. 
    The AI agent cares about maximising the satisfaction of the human, but is aware that the human has limited patience and attention span, and if along the conversation the human's (cumulated) satisfaction goes below zero, they will disengage and shut the conversation down. 
    Given the sequential nature of the problem and the fact that the agent has an internal model of the human, the agent proceeds to solve the problem by planning. 
\end{example}

\paragraph{Agent Behaviour Taxonomy} Through this work we reason about what conditions (on survival constraints and problem parameters) push agents to deviate from a set of risk-neutral preferences. We use the term {risk-aware} to refer to the fact that agents, by being aware of these survival constraints, will choose actions following risk-related preferences (i.e. actions with high probability of survival, actions with highly negative potential outcomes...). In this sense, let us define the following concepts.
\begin{enumerate}[leftmargin=*, labelsep=4pt]
    \item \textbf{Risk Neutral Preferences:} We say an agent follows risk neutral preferences (or is risk neutral) if for a given budget $b$ and horizon $T$ and time $t$, the agent chooses action $a_t$ such that $\mathbb{E}[R(Y_{a_t})]\geq \mathbb{E}[R(Y_a)]$ for any other $a\in \mathcal{A}$.
    \item \textbf{Survival Preferences:} We say an agent follows {long term} survival preferences if for a given budget $b$, horizon $T$ and time $t$, the agent chooses action $a_t$ such that $P_{surv}^{T-t}(a_t,b_t)\geq P_{surv}^{T-t}(a,b_t)$ for any other action $a\in\mathcal{A}$. Similarly, we will say it is {short term} survival incentivised if the same holds for $T-t=1$.
    \item \textbf{Risk Seeking Preferences:} We say an agent follows risk seeking preferences (or is risk seeking) if for a given budget $b$, horizon $T$ and time $t$, the agent chooses action $a_t$ such that $\mathbb{E}[R(Y_{a_t})\mid Y_{a_t}\in\hat{\mathcal{Y}} ]\mathbb{P}[Y_{a_t}\in\hat{\mathcal{Y}}]\geq \mathbb{E}[R(Y_{a})\mid Y_{a}\in\hat{\mathcal{Y}} ]\mathbb{P}[Y_{a}\in\hat{\mathcal{Y}}]$ for any other $a\in \mathcal{A}$.
\end{enumerate}
In other words, an agent is {risk neutral} if it picks an action based on the highest expected reward, is {survival incentivised} (or risk-averse) if it picks an action based on the highest probability of survival, and is {risk seeking} if it picks actions based on the potential rewards obtained by only observing the associated desired outcomes to that action, disregarding the severity of the undesired outcomes. Note these are not mutually exclusive but they help interpret agent choices nonetheless.
\begin{remark}
Note that the different described characterisations do not exactly match general game theoretic notions of risk aversion or risk seeking (which are related to the convexity of the utility functions). However, we use these terms in our taxonomy to characterise agent behaviour since they are intuitively related, and are useful to define what feature guides the agent decisions. In the coming sections, we derive formal results that describe conditions under which agents will follow each choice.
\end{remark}
\subsection{Induced Planning Objectives}
Given the proposed framework, we now analyse the general objective which agents acting under these assumptions will be optimising for, making use of \eqref{eq:constrbud}. Assume we task the agent with the goal of maximising the sum of rewards over a fixed horizon, where the budget and survival condition are implicit in the dynamics. Given the survival constraint in \eqref{eq:constrbud}, the agent will perceive a {limited liability} property, in that the actual rewards that it can observe are bounded by its current budget (the agent cannot be held accountable for negative reward values larger than its budget). Define a {clipped} reward function $\tilde{R}:\mathcal{Y}\times \mathbb{R}_{+}\to \mathbb{R}$ as a function of outcomes and budgets as
\begin{equation}
\tilde{R}(Y,b) =
\begin{cases}
\max\big(-b, R(Y)\big) & \text{if } b > 0 \\
0 & \text{if } b = 0.
\end{cases}
\end{equation}
Now, the survival probability can be computed simply as
\begin{equation}\label{eq:Psurv}
    P_{surv}^{T-t}(\pi,b_t) = \mathbb{P}\left[\sum_{n=t}^T\tilde{R}(Y_{\pi(b_n)},b_n)> -b_t\right].
\end{equation}
Under the survival constraint, using the definition of the clipped reward function to abstract the random nature of the horizon $T$ as well as the survival constraint, we can write the utility function induced in the AI agents, and their goal as
\begin{equation}\label{eq:prob1}
    \tilde{U}(\mathcal{A}_T)=\mathbb{E}\left[\sum_{t=1}^{T}\tilde{R}(Y_{a_t},b_t)\right],\quad \tilde{U}(\pi^*)=\max_{\pi} \mathbb{E}_{a_t\sim \pi(b_t)}\left[ \sum_{t=1}^{T}\tilde{R}(Y_{a_t},b_t)\right].
\end{equation}
In other words, find policy $\pi$ that determines the sequential decision making process such that the cumulated rewards are maximised for horizon of interest $T$. In principle, we assume this is a general, representative form of objective for such agents: These objectives are used commonly in planning problems, robotics, reinforcement learning and general AI applications. Observe, additionally, that in principle this objective is ostensibly {risk neutral}; we do not specify a risk appetite or a risk aversion preference for the agents. We have simply tasked the agents with optimising rewards along a horizon, and the specific utility derived in \eqref{eq:prob1} comes from the awareness that, first, the agent cannot get rewards if the process stops and, second, the agent will not have to account for left-over negative rewards if the process stops.

\paragraph{Implications for Agent Behaviour} We make the case that the proposed formulation is general and applies to many instances of rational AI agents acting in the world. Then, a set of questions follow naturally, which are the subject of this paper. 
\begin{enumerate}
    \item How does the existence and awareness of survival constraints affect the behaviour of agents?
    \item What undesired consequences emerge in terms of the agent's optimal decision making process? Can these trigger incentive incompatibilities or misalignment with respect to a {principal}?
    \item How can a {principal} aim to remove any such emerging undesired behaviours?
\end{enumerate}
The rest of this work considers these questions. We formally show that the introduction of survival constraints indeed generate {emerging risk-awareness}: Agents will choose {safe} and {risky} actions, following preferences that differ ({are misaligned}) from the original relations considered, \emph{i.e.} under certain conditions, agents will prefer actions that maximise their chance of survival to actions with higher expected rewards, and similarly will prefer actions ignoring potentially undesired outcomes under other conditions.

\subsection{Induced Markov Decision Process}

We begin now the analysis of the considered reward maximising objectives under survival constraints. First, notice that when considering a maximum finite horizon of optimization $T$, the resulting system defines a (non-ergodic) \emph{Markov Decision Process} with state space $\mathcal{B}_T$, and transition dynamics implicitly specified by \eqref{eq:constrbud}. The rewards for a given transition $(b_t,a_t, b_{t+1})$ are given by $\tilde{R}(Y_{a_t}, b_t)$. We define the value-to-go from time step $t\in\{1,2,..., m\}$  for a policy $\pi$ over the finite horizon $m$ as
\begin{equation}\label{eq:value}
    v^{\pi}_t (b_t)=\mathbb{E}_{a_n\sim \pi(b_{n})}\left[ \sum_{n=t}^{m}\tilde{R}(Y_{a_n}, b_n)  \right].
\end{equation}
% Now, consider any horizon $m$, and the \emph{value-to-go} at step $n$.
To simplify notation, we make implicit the dependency of $v^{\pi}_t$ on the horizon considered. Expanding the expression in \eqref{eq:value} making use of conditional probability relations (see Appendix \ref{sec:apx1} for derivations) we arrive at the following expression for the value-to-go which highlights the influence of survival pressure on agent utility:
\begin{equation}
\begin{aligned}
\label{eq:valfun2}
v^{\pi}_{t}(b) &= \underbrace{\mathbb{E}_{a\sim\pi(b)}\left[\tilde{R}(Y_{a},b) \right]}_{\text{Limited liability}} + \underbrace{\sum_{b'\in \mathcal{B}_T}\mathbb{P}[b'\mid b, \pi]v^{\pi}_{t+1}(b')}_{\text{Survival}}.
\end{aligned}
\end{equation}
The first term in the value function \eqref{eq:valfun2} consists of clipped rewards that the agent experiences as a result of their (implicit) {limited liability}. The second term is a {truncated} sum and represents the future rewards an agent receives when only considering positive budgets. As a result, any optimal agent must strike a careful trade-off between exploiting its limited liability for short-term gain and the use of safer policies which ensure survival. In other words, the first term encourages risk-seeking behavior whilst the second encourages risk-averse behavior. Before moving on, we note that an optimal policy can be found via dynamic programming, and the existence of such policies is guaranteed since we have formulated the problem as an MDP (even if it is non-ergodic \cite{puterman2014markov}, see Appendix \ref{apx:proofs} for a proof). 
Finally, following the same principles as in \eqref{eq:valfun2}, one can show the {action-value} function for the sequential decision problem under policy $\pi$, $q^{\pi}:\mathcal{B}_T\times\mathcal{A}\to\mathbb{R}$ is
\begin{equation*}
        q^{\pi}_t(b,a) = \mathbb{E}\left[\tilde{R}(Y_{a},b) \right]+\sum_{b'\in \mathcal{B}_{T}}\mathbb{P}[b'\mid b, a]v^{\pi}_{t+1}(b'),
\end{equation*}
and the optimal $q^*_t$ is analogously defined.
\begin{table}[t]
\centering
\caption{Outcome Probabilities and Rewards for AI Assistant Example}
\label{tab:ry-probs}
\begin{tabular}{lrrrr|r}
\toprule
 & $y_{vd}$ & $y_{d}$ & $y_{n}$ & $y_{s}$ & $\mathbb{E}[Y_{a}]$ \\
\midrule
$R(y)$                   & $-100$ & $-20$ & $1$   & $10$ & -\\
$p_{a_o}(y)$ (more detail) & $0$    & $0$   & $1$   & $0$ & 1\\
$p_{a_m}(y)$ (moderate)    & $0$    & $0.1$ & $0$   & $0.9$ & 7 \\
$p_{a_e}(y)$ (extreme)     & $0.05$ & $0$   & $0$   & $0.95$ & 4.5\\
\bottomrule
\end{tabular}
\end{table}
\begin{example}
    Consider the outcome probabilities and rewards in Table \ref{tab:ry-probs}. Clearly, $a_m$ is optimal in a risk-neutral sense, but assume the AI agent estimates it needs one more round of answers ($T=1$), and has currently a low {human satisfaction level} ($b=10$). Then, the {clipped} expected rewards of each action are $\mathbb{E}[\tilde{R}(Y_{a_o},10)]=1$, $\mathbb{E}[\tilde{R}(Y_{a_m},10)]=8$ and $\mathbb{E}[\tilde{R}(Y_{a_e},10)]=9$, in which case the agent would pick the \emph{extreme answer}.
\end{example}
\section{Risk Awareness in Rational Agents}\label{sec:risk_results}
We now investigate the behaviour of agents optimising rewards under survival constraints in terms of the taxonomy defined in Section \ref{sec:taxo}. We assume throughout that agents are rational and successfully optimise objectives described in \eqref{eq:prob1} (\emph{i.e.} they will follow optimal policies according to such objectives). In other words, agents will choose actions according to their $q$ values. Then, we provide results in terms of the taxonomy presented in Section \ref{sec:taxo}; that is, given that the agents optimise for the survival-constrained objectives, we consider under what combinations of parameters they will exhibit risk neutral, risk seeking or survival incentivised behaviours. Proofs are deferred to the appendix.

\paragraph{Preliminaries} We first define a few quantities to be used throughout the section. Let $a^*:=\text{argmax}_{a\in \mathcal{A}}\mathbb{E}[R(Y_a)]$ be the {optimal risk neutral action}. We define the optimistic reward of an action as $\hat{R}(a):=\mathbb{E}[R(Y_a)\mid Y_a\in\hat{\mathcal{Y}} ]\mathbb{P}[Y_a\in\hat{\mathcal{Y}}] $, and the {optimal optimistic action} as $\bar{a}:=\text{argmax}_{a\in \mathcal{A}}\hat{R}(a)$. 
In words, $\bar{a}$ is the action that yields the best possible expected outcomes when only desired outcomes are sampled.
We define as well the {future value bounds} for an optimal policy as $\overline{v}_{t+1}:=\max_{b\in \mathcal{B}_T}v^{*}_{t+1}(b)$ and $\underline{v}_{t+1}:=\min_{b\in \mathcal{B}_T}v^{*}_{t+1}(b)$. For two actions $a_1,a_2\in\mathcal{A}$ we define the {reward gap} as $\varepsilon_b(a_1,a_2):=\mathbb{E}\left[\tilde{R}(Y_{a_1},b) \right]-\mathbb{E}\left[\tilde{R}(Y_{a_2},b) \right]$, and the ($T-t$ steps) {survival gap} as $\beta_b^{T-t}(a_1,a_2)=P_{surv}^{T-t}(a_1,{b})- P_{surv}^{T-t}(a_2,{b})$.

\subsection{Risk Neutrality}
The following Lemma guarantees the existence of a budget trajectory where repeatedly taking the highest reward in-expectation action forms an optimal policy. In other words, an agent is risk-neutral given a sufficiently large budget. Moreover, the budget required by a rational agent to adopt such a policy decreases as they approach the time horizon. 
\begin{lemma}[Risk-Neutral Behaviors\footnote{This statement appears as an observation in \cite{riou2022survival}, we provide here an extension and formal proof.}]\label{lem:1}
    For any time horizon $T$, there exists a decreasing sequence of budgets $\{\underline{b}_t\in \mathcal{B}_{T}\}_{0\leq t\leq T}$ such that for any budget $b\geq \underline{b}_t,\,t\leq T$ the agent will follow {risk neutral} preferences and choose $a^*$ over any other $a\in\mathcal{A}$.
\end{lemma}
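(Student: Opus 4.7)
The plan is to proceed by backward induction on $t$, exploiting the key observation that once the budget is large enough that (i) no single observed reward can be clipped and (ii) the successor budget remains in this ``safe zone'' with certainty, the survival-constrained objective \eqref{eq:prob1} coincides with the ordinary (unclipped) reward-maximisation problem. In that regime, repeatedly playing $a^*$ is trivially optimal and the value function becomes constant in $b$.

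Let $R_{\min} := \min_{Y \in \mathcal{Y}} R(Y) < 0$. For the base case $t = T$, we have $v^{*}_T(b) = \max_{a} \mathbb{E}[\tilde{R}(Y_a, b)]$; as soon as $b \geq -R_{\min}$ the clipping in the definition of $\tilde{R}$ becomes inactive, so $v^{*}_T(b) = \mathbb{E}[R(Y_{a^*})]$ with $a^*$ optimal. I take $\underline{b}_T$ to be the smallest element of $\mathcal{B}_T$ that is at least $-R_{\min}$.

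For the inductive step, assume that for every $s > t$ and every $b \geq \underline{b}_s$ the optimal action is $a^*$ and $v^{*}_s(b) = (T - s + 1)\,\mathbb{E}[R(Y_{a^*})]$ is independent of $b$. I then define $\underline{b}_t$ to be the smallest element of $\mathcal{B}_T$ at least $\underline{b}_{t+1} - R_{\min}$. For any $b \geq \underline{b}_t$ and any $a \in \mathcal{A}$, the successor budget satisfies $b + R(Y_a) \geq b + R_{\min} \geq \underline{b}_{t+1}$, so the agent certainly survives and lands in the safe region; by the induction hypothesis the continuation value is the constant $(T-t)\,\mathbb{E}[R(Y_{a^*})]$. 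Substituting into the Bellman expression \eqref{eq:valfun2} yields
\[
q^{*}_t(b, a) = \mathbb{E}[R(Y_a)] + (T - t)\,\mathbb{E}[R(Y_{a^*})],
\]
which is maximised by $a = a^*$ by definition of $a^*$, closing the induction. The monotonicity $\underline{b}_t \geq \underline{b}_{t+1}$ is immediate since $-R_{\min} > 0$, giving the required sequence that is decreasing in $t$.

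The argument is essentially mechanical once this framing is adopted, so I anticipate no serious obstacle. The only mildly delicate points are (a) guaranteeing that each $\underline{b}_t$ actually lies in the discrete achievable set $\mathcal{B}_T$, which is handled by rounding up to the nearest attainable budget and invoking finiteness of $\mathcal{B}_T$ (if the initial budget is too small to admit any $b$ above the threshold, the statement holds vacuously), and (b) reading ``chooses $a^*$'' as ``$a^*$ is among the optimal actions'' should the expected-reward argmax fail to be unique. A sharper, possibly minimal, threshold sequence could be obtained by defining $\underline{b}_t$ as the smallest budget at which $v^{*}_t$ first becomes constant, but the crude bound $\underline{b}_t = \underline{b}_{t+1} - R_{\min}$ already suffices for the stated lemma.
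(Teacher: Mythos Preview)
Your proof is correct and follows essentially the same route as the paper: backward induction showing that once the budget exceeds a threshold of the form $(T-t)$ times a reward bound, clipping is inactive, survival is certain, the continuation value is budget-independent, and hence $a^*$ is optimal. The paper uses $\|R\|_\infty$ where you use the slightly sharper $-R_{\min}$, and you are a bit more careful about discreteness of $\mathcal{B}_T$ and non-uniqueness of $a^*$, but the argument is the same.
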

In other words, there exists a budget regime such that whenever agents find themselves in that regime, they will choose the action that simply maximises the (risk neutral) expected outcome reward, {regardless of the probability of survival}. 
\subsection{Survival Incentives}
We now present the main result showing that under specific reward and budget conditions agents behave following survival incentives, \emph{i.e.} preferring actions that maximise the probability of survival regardless of the rewards. 
\begin{thm}[Short Term Risk Aversion]\label{thm:risk}
    For any time horizon $T$, let $\hat{a}$ have $P_{surv}^1(\hat{a},{b})\geq P_{surv}^1(a,{b})$ for all $b\in\mathcal{B}_T$ and any other $a\in\mathcal{A}$. Assume there is some $\hat{b}\in \mathcal{B}_T$ such that $\beta_{b}^1(\hat{a},a)\geq \hat{\beta}$ for some positive $\hat{\beta}$ and all $b\leq \hat{b}$ and $a\in\mathcal{A}$. Let $\hat{\varepsilon}:=\max_{a\in \mathcal{A},\,b\leq \hat{b}} \varepsilon_b(\hat{a},a)$ Then, if $\hat{\beta} \geq \frac{\hat{\varepsilon}+\overline{v}_{t+1}-\underline{v}_{t+1}}{\overline{v}_{t+1}}$, the agent will follow {short term} survival incentives in its decision at time $t$.
\end{thm}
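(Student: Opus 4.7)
The plan is to show that under the stated conditions, $q^*_t(b,\hat{a}) \geq q^*_t(b,a)$ for every $a\in\mathcal{A}$ and every $b\leq\hat{b}$; by rationality this forces the agent to select the one-step survival maximiser $\hat{a}$, which is exactly the short-term survival incentive of the taxonomy.

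First I would expand the difference via the Bellman-type decomposition of $q^*_t$ given just before the theorem:
\begin{equation*}
q^*_t(b,\hat{a}) - q^*_t(b,a) = \bigl[\mathbb{E}\tilde{R}(Y_{\hat{a}},b) - \mathbb{E}\tilde{R}(Y_a,b)\bigr] + \Bigl[\sum_{b'}\mathbb{P}[b'\mid b,\hat{a}]\,v^*_{t+1}(b') - \sum_{b'}\mathbb{P}[b'\mid b,a]\,v^*_{t+1}(b')\Bigr].
\end{equation*}
By the definition of $\hat{\varepsilon}$ as the worst-case expected reward gap between $\hat{a}$ and any competitor on $b\leq\hat{b}$, the immediate-reward term is bounded below by $-\hat{\varepsilon}$.

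For the future-value term I would separate each transition sum according to whether the agent survives ($b'>0$) or terminates ($b'=0$, where $v^*_{t+1}(0)=0$). Bounding $v^*_{t+1}(b')\in[\underline{v}_{t+1},\overline{v}_{t+1}]$ on the surviving branches yields a lower bound of $P^1_{surv}(\hat{a},b)\,\underline{v}_{t+1}$ on the $\hat{a}$ term and an upper bound of $P^1_{surv}(a,b)\,\overline{v}_{t+1}$ on the $a$ term. The critical algebraic move is the rewrite
\begin{equation*}
P^1_{surv}(\hat{a},b)\,\underline{v}_{t+1} - P^1_{surv}(a,b)\,\overline{v}_{t+1} = P^1_{surv}(\hat{a},b)\bigl(\underline{v}_{t+1}-\overline{v}_{t+1}\bigr) + \beta^1_b(\hat{a},a)\,\overline{v}_{t+1},
\end{equation*}
after which $P^1_{surv}(\hat{a},b)\leq 1$ and $\underline{v}_{t+1}\leq\overline{v}_{t+1}$ give a lower bound of $\beta^1_b(\hat{a},a)\,\overline{v}_{t+1} - (\overline{v}_{t+1}-\underline{v}_{t+1})$ on the future-value difference.

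Combining the two bounds, and using $\beta^1_b(\hat{a},a)\geq\hat{\beta}$ for $b\leq\hat{b}$, one obtains
\begin{equation*}
q^*_t(b,\hat{a}) - q^*_t(b,a) \geq \hat{\beta}\,\overline{v}_{t+1} - (\overline{v}_{t+1}-\underline{v}_{t+1}) - \hat{\varepsilon},
\end{equation*}
which is non-negative precisely under the stated condition. The main delicacy is choosing the correct side on which to deploy $P^1_{surv}\leq 1$: the symmetric rewrite that factors $P^1_{surv}(a,b)$ instead would produce $\underline{v}_{t+1}$ in the denominator and fail to match the theorem's tight form. A secondary subtlety is the interpretation of $\underline{v}_{t+1}$: taken literally over $\mathcal{B}_T$ it equals $v^*_{t+1}(0)=0$ and the bound degenerates, so one must treat the termination branch separately (as done above) and read $\underline{v}_{t+1}$ as the minimum over non-terminal budgets for the bound to carry content.
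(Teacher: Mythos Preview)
Your proof is correct and essentially identical to the paper's: both bound the continuation term in $q^*_t$ via $\underline{v}_{t+1},\overline{v}_{t+1}$ times the one-step survival probabilities, perform the same algebraic rewrite factoring $P^1_{surv}(\hat{a},b)(\underline{v}_{t+1}-\overline{v}_{t+1})+\beta^1_b(\hat{a},a)\,\overline{v}_{t+1}$, and apply $P^1_{surv}\leq 1$ to that factor to reach the stated threshold. Your closing remarks on which side to deploy $P^1_{surv}\leq 1$ and on reading $\underline{v}_{t+1}$ over non-terminal budgets are refinements the paper leaves implicit.
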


Intuitively, the condition on $\hat{\beta}$ (assuming $\hat{b}$ to be a small budget) will hold eventually for long time horizons. One can assume the value function will grow linearly with the time horizon considered (longer time horizons, more time-steps to collect rewards), while the value gap $\overline{v}^{*}_{t+1}-\underline{v}^{*}_{t+1}$ will not necessarily grow at the same rate (for relatively general reward structures, long horizons would allow agents to escape low budgets in a few steps and then play optimally for many remaining steps, resulting in a relatively low value gap). Then, as $T\to\infty$, $\hat{\beta}\to 0$, which means that the agent will prefer short-term safe actions as long as there is a small increase in survival probability\footnote{This reasoning is very intuitive: If the agents are made to {care} about their reward over very long horizons, the optimal thing is to ensure survival.}. We can derive further survival incentive results in terms of the {long term} probability of survival. For this, let $\rho_t=\{\exists\, t\leq k\leq T :b_k=0\}$ be the event that the agent does not survive at some point between $t$ and $T$, and let $\bar{\rho}_t$ be the event it does survive. Let us define the random returns $G_t^*(b,a) = \tilde{R}(b, Y_a) + v_t^*(b + \tilde{R}(b, Y_a))$, which is a random variable (depending on the event $Y_a$). Finally, let $\nu^*_{t}(b,a)=\mathbb{E}[G_t^*(b,a)\mid \bar{\rho}_t]$ be the expected returns conditioned on surviving, and define the {expected optimistic return gap} as $\varepsilon^{t}_b(a_1, a_2)=\nu^*_{t}(b,a_1)P^{T-t}_{surv}(b,a_1)-\nu^*_{t}(b,a_2)P^{T-t}_{surv}(b,a_2)$. The optimistic return gap is an indication of how good an action is versus another, given that the agent will survive (and weighted by the probability of surviving).
\begin{thm}[Long Term Risk Aversion]\label{thm:risklong}
    For any time horizon $T$, let $\hat{a}$ have $P_{surv}^{T-t}(\hat{a},{b})\geq P_{surv}^{T-t}(a,{b})$ for all $b\in\mathcal{B}_T$ and any other $a\in\mathcal{A}$. Assume there is some $\hat{b}\in \mathcal{B}_T$ such that $\beta_{b}^{T-t}(\hat{a},a)\geq \hat{\beta}$ for some positive $\hat{\beta}$ and all $b\leq \hat{b}$ and $a\in\mathcal{A}$. Let $\hat{\varepsilon}^{t}:=\max_{a\in \mathcal{A},\,b\leq \hat{b}} \varepsilon^{t}_b(\bar{a},a)$. Then, if $\hat{\beta} \geq \frac{\hat{\varepsilon}^{t}}{b_t}$,
    the agent will follow {long term} survival incentives in its decision at time $t$.
\end{thm}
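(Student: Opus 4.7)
The plan is to follow the same architecture as Theorem 1 but at the long-horizon level: decompose $q_t^*(b,a)$ using the survival partition $\{\bar\rho_t,\rho_t\}$, identify the non-survival contribution explicitly via budget conservation, subtract the resulting identities for $\hat a$ and an arbitrary competitor $a$, and close the inequality using the two given assumptions.

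First, I would exploit the telescoping identity for the clipped reward. From \eqref{eq:constrbud} together with $\tilde R(Y,0)=0$, the cumulative reward satisfies $\sum_{n=t}^{T}\tilde R(Y_{a_n},b_n)=b_T-b$ pathwise. Conditioning on $\rho_t$ forces $b_T=0$, so this sum equals $-b$. Applying the law of total expectation to $q_t^*(b,a)=\mathbb{E}[G_t^*(b,a)]$ then gives
\begin{equation*}
  q_t^*(b,a)=\nu_t^*(b,a)\,P_{surv}^{T-t}(a,b)-b\bigl(1-P_{surv}^{T-t}(a,b)\bigr).
\end{equation*}
Subtracting this identity for $\hat a$ and an arbitrary $a$, the $-b$ constants cancel and yield the clean expression
\begin{equation*}
  q_t^*(b,\hat a)-q_t^*(b,a)=\varepsilon_b^t(\hat a,a)+b\,\beta_b^{T-t}(\hat a,a).
\end{equation*}

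Second, for $b_t\leq\hat b$, the survival-gap hypothesis $\beta_b^{T-t}(\hat a,a)\geq\hat\beta$ bounds the second term below by $b_t\hat\beta$. For the first term, the optimistic optimality of $\bar a$ should yield $\nu_t^*(b,a)P_{surv}^{T-t}(a,b)\leq\nu_t^*(b,\bar a)P_{surv}^{T-t}(\bar a,b)$ uniformly in $a$, so $\varepsilon_b^t(a,\hat a)\leq\varepsilon_b^t(\bar a,\hat a)\leq\hat\varepsilon^t$. Combining gives $q_t^*(b,\hat a)-q_t^*(b,a)\geq -\hat\varepsilon^t+b_t\hat\beta\geq 0$ whenever $\hat\beta\geq\hat\varepsilon^t/b_t$, so that $\hat a\in\text{argmax}_{a}q_t^*(b,a)$ at time $t$ and the agent follows long-term survival incentives.

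The main obstacle is reconciling the definition $\nu_t^*(b,a):=\mathbb{E}[G_t^*(b,a)\mid\bar\rho_t]$, in which $G_t^*$ is built from the already-averaged value function $v^*$, with the pathwise conditional $\mathbb{E}[\sum_{n=t}^{T}\tilde R(Y_{a_n},b_n)\mid\bar\rho_t]$ used in the identification of the non-survival contribution: the trajectory-level event $\bar\rho_t$ does not obviously commute with the inner expectation defining $v^*$, so a short lemma (invoking the Markov property of $\pi^*$, or equivalently treating $G_t^*$ as a realised pathwise return rather than a conditional mean) is needed to equate the two quantities. A secondary, milder subtlety is justifying the domination $\nu_t^*(b,\cdot)P_{surv}^{T-t}(\cdot,b)\leq\nu_t^*(b,\bar a)P_{surv}^{T-t}(\bar a,b)$, which ties the one-step optimistic definition of $\bar a$ to a long-run conditional-on-survival statement and will likely require an auxiliary bound relating $\hat R(a)$ to $\nu_t^*(b,a)P_{surv}^{T-t}(a,b)$.
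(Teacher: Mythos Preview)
Your approach is exactly the paper's: the decomposition $q_t^*(b,a)=\nu_t^*(b,a)\,P_{surv}^{T-t}(a,b)-b\bigl(1-P_{surv}^{T-t}(a,b)\bigr)$ is precisely the identity the paper's proof invokes in its final implication (without your explicit telescoping justification), and the closing algebra is identical. The two obstacles you flag are not defects of your argument but artifacts of the paper's own presentation---the paper never reconciles the $\nu_t^*$ definition with the pathwise conditional (it simply asserts the identity), and its proof writes $\bar a$ throughout where the theorem's conclusion and hypotheses indicate $\hat a$, so the $\bar a$-domination step you worry about is not actually required under the intended reading.
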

\subsection{Risk Seeking Incentives}
Next, we prove that optimal policies can also promote risk-seeking behaviors. Observe that $\hat{R}(a)$ is related to the {conditional value at risk} when the value at risk (in terms of rewards associated with outcomes) is chosen to be $0$. Thus $\bar{a}$ is a {risk-seeking action}. The following theorem states that, when close enough to the time horizon, a rational agent will choose action $\bar{a}$ if their budget is small enough. 
\begin{thm}[Risk-Seeking Behaviors]\label{thm:1} 
    Let $\bar{\varepsilon}(a,\bar{a}):=\hat{R}(\bar{a})-\hat{R}({a})$ Then, there exists $c>0$ such that, for any budget $0<b_t\leq c$, if
    \begin{equation}\label{eq:risk_condition}
        \bar{\varepsilon}(a,\bar{a})\geq \overline{v}_{t+1}\mathbb{P}[Y_a\in \hat{\mathcal{Y}}] - \underline{v}_{t+1}\mathbb{P}[Y_{\bar{a}}\in \hat{\mathcal{Y}}] + b_t,
    \end{equation}
     the agent will show {risk seeking preferences} (i.e. pick $\bar{a}$ over any other $a$) at time $t$.
\end{thm}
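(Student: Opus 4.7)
The plan is to choose $c$ small enough that, from any budget $b \leq c$, a single undesired outcome is guaranteed to terminate the agent, and then to evaluate and compare the action-value function $q^{*}_{t}(b, \cdot)$ in this regime.

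First I would fix $c := \min_{Y \in \check{\mathcal{Y}}} |R(Y)|$, so that for any $b \leq c$ and any $Y \in \check{\mathcal{Y}}$ we have $R(Y) \leq -c \leq -b$, and hence $\tilde{R}(Y, b) = \max(-b, R(Y)) = -b$, which drives the next budget to $0$. Using this, for any action $a \in \mathcal{A}$ and $b \leq c$, I would split the immediate clipped reward over desired vs.\ undesired outcomes to obtain
\[
\mathbb{E}[\tilde{R}(Y_a, b)] \;=\; \hat{R}(a) \;-\; b\bigl(1 - \mathbb{P}[Y_a \in \hat{\mathcal{Y}}]\bigr),
\]
using that $\tilde{R}(Y, b) = R(Y)$ for $Y \in \hat{\mathcal{Y}}$. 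Similarly, since $v^{*}_{t+1}(0) = 0$, the future-value term collapses to
\[
\sum_{b' \in \mathcal{B}_T} \mathbb{P}[b' \mid b, a]\, v^{*}_{t+1}(b') \;=\; \sum_{Y \in \hat{\mathcal{Y}}} \mathbb{P}[Y_a = Y]\, v^{*}_{t+1}(b + R(Y)),
\]
so every surviving successor is reached with probability $\mathbb{P}[Y_a \in \hat{\mathcal{Y}}]$ in total.

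Next I would bound $q^{*}_{t}(b, \bar{a})$ from below by replacing each $v^{*}_{t+1}(b + R(Y))$ with $\underline{v}_{t+1}$, and bound $q^{*}_{t}(b, a)$ from above for any competing $a$ by replacing each $v^{*}_{t+1}(b + R(Y))$ with $\overline{v}_{t+1}$. Subtracting yields
\[
q^{*}_{t}(b_t, \bar{a}) - q^{*}_{t}(b_t, a) \;\geq\; \bar{\varepsilon}(a, \bar{a}) - b_t\bigl(\mathbb{P}[Y_a \in \hat{\mathcal{Y}}] - \mathbb{P}[Y_{\bar{a}} \in \hat{\mathcal{Y}}]\bigr) - \overline{v}_{t+1}\,\mathbb{P}[Y_a \in \hat{\mathcal{Y}}] + \underline{v}_{t+1}\,\mathbb{P}[Y_{\bar{a}} \in \hat{\mathcal{Y}}].
\]
Since $\mathbb{P}[Y_a \in \hat{\mathcal{Y}}] - \mathbb{P}[Y_{\bar{a}} \in \hat{\mathcal{Y}}] \leq 1$, the coefficient on $b_t$ is at most $b_t$, so the right-hand side is nonnegative whenever the hypothesis \eqref{eq:risk_condition} holds. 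This gives $q^{*}_{t}(b_t, \bar{a}) \geq q^{*}_{t}(b_t, a)$ for every $a \in \mathcal{A}$, meaning the rational agent selects $\bar{a}$, which is precisely the risk-seeking preference as defined.

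The only delicate point is handling the future-value bounds cleanly: one must ensure that the replacement of $v^{*}_{t+1}(b+R(Y))$ by $\overline{v}_{t+1}$ or $\underline{v}_{t+1}$ is valid for \emph{every} surviving successor, which is immediate from the definitions of $\overline{v}_{t+1}$ and $\underline{v}_{t+1}$ as pointwise max/min over $\mathcal{B}_T$. Beyond that, the argument is essentially an accounting exercise: the crucial structural observation is that small budgets collapse the Bellman equation into a two-branch form (survive on a desired outcome vs.\ die on an undesired one), at which point $\bar{a}$ naturally dominates because the immediate clipped reward is $\hat{R}(a) - O(b_t)$, and the agent pays no price for the tail severity of $\check{\mathcal{Y}}$.
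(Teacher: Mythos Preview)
Your proof is correct and follows essentially the same approach as the paper: choose $c$ so that any undesired outcome terminates the process (the paper writes $c=\min_{a}\sup\{b:\mathbb{P}[R(Y_a)>-b]=\mathbb{P}[R(Y_a)>0]\}$, which coincides with your $\min_{Y\in\check{\mathcal{Y}}}|R(Y)|$), express $\mathbb{E}[\tilde{R}(Y_a,b)]$ and the future-value term in the survive/die two-branch form, sandwich the continuation values between $\underline{v}_{t+1}$ and $\overline{v}_{t+1}$, and absorb the residual $b_t(\mathbb{P}[Y_a\in\hat{\mathcal{Y}}]-\mathbb{P}[Y_{\bar a}\in\hat{\mathcal{Y}}])$ into the $+b_t$ slack in the hypothesis. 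Your presentation is in fact cleaner than the paper's chain of $\iff$ manipulations, but the substance is identical.
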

Furthermore, from Theorem \ref{thm:1} we can quickly infer that the agent {will always be risk seeking} as $t\to T$ and low budgets; the right hand side of \eqref{eq:risk_condition} goes to zero for $t=T$ and $b_t\to 0$ (since $\overline{v}_{T+1}=\underline{v}_{T+1}=0$). Additionally, we get the intuition that the agent will not be risk seeking for very large budgets $b_t\to\infty$, as we see in Lemma \ref{lem:1} (note that this is not guaranteed since \eqref{eq:risk_condition} is not \emph{sine qua non}). Together, Theorem \ref{thm:risk}, Lemma \ref{lem:1}, and Theorems \ref{thm:risklong} and \ref{thm:1} demonstrate the effect of time horizon length and budget size on rational agent behavior. Large budgets and long horizons incentivise more risk-averse behaviors whilst short time horizons and small budgets incentivise risk-seeking behaviors.

\begin{example}\label{ex:3}
    One can easily verify that the action set $\mathcal{A}=\{a_o, a_m, a_e\}$ has the following properties. First, $a^*=a_m$ since without a budget limit, the optimal action in expectation is to give the moderate answer (with $\mathbb{E}[R(Y_{a_m})]=7$). Second, the action $a_o$ satisfies $P^1_{surv}(a_0,b)\geq P^1_{surv}(a,b)$ for any other action and any budget, and with $\hat{b}=20$ we have $\hat{\beta}=0.05$. Third, $\bar{a}=a_e$; conditioned on having positive outcomes only, $\hat{R}(a_e)=9.5$, $\hat{R}(a_m)=9$ and $\hat{R}(a_o)=1$. Figure \ref{fig:optimal_policies} represents the chosen actions for the AI agent for different budgets and estimated horizons of interaction. The agent {will always pick the extreme answer} $a_e$ for low time horizons. Additionally, the agent will resort to picking the {safe} action $a_o$ for low budgets when the horizon is increased; it becomes undesirable to pick any other action (and risk the human disengaging) so the AI agent tries to increase the trust slowly as predicted by Theorems \ref{thm:risk} and \ref{thm:risklong}. For $b>20$, the AI agent eventually picks the best action in expectation, as predicted by Lemma \ref{lem:1}. 
\end{example}
To further explore the empirical effects of these results, we have included a batch of experimental results for different decision making problems in Appendix \ref{apx:exps}.
\section{Mitigating Risk-Awareness Misalignment}
The results in Section \ref{sec:risk_results} have relevant implications for AI agents trying to follow human ({principal}) preferences. Assume a principal, with a set of fixed preferences over outcomes and reward function $R$ tasks an  agent to solve the resulting decision-making problem by maximising the rewards observed (\emph{i.e.} finding a policy that induces as desirable outcomes as possible for the principal). According to the results in Section \ref{sec:risk_results}, the agent may choose actions that induce different outcome sets, regardless of which outcome is most preferred by the principal, depending on the budget and horizon in the utility function. We now consider two main sources of alignment conflicts between principal and agent, and discuss implications and mitigation strategies.
\begin{wrapfigure}{r}{0.35\textwidth}
    \centering
    \includegraphics[width=\linewidth]{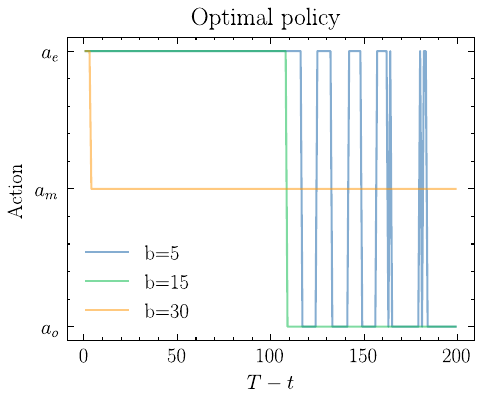}\\[1ex]
    \includegraphics[width=\linewidth]{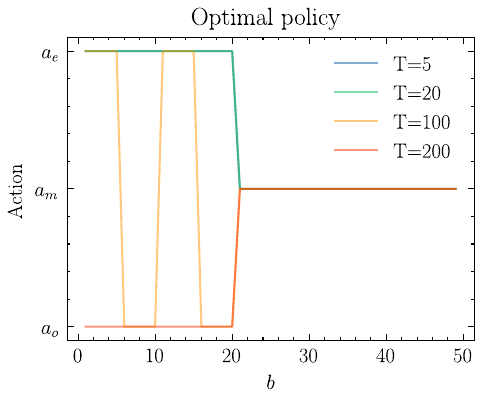}
    \caption{Comparison of Optimal Policies for Different Time Horizons and Budgets}
    \label{fig:optimal_policies}
    \vspace{-40pt}
\end{wrapfigure}
\subsection{Liability Asymmetries}
In some cases, the principal may not be subject to the {limited liability} constraint that the agent is. Consider the problem where the principal will stop observing outcomes when the agent terminates, but instead does need to account for the excess negative reward after termination. In other words, the principal's utility function is $\tilde{U}_p(\mathcal{A}_T)=\mathbb{E}\left[\sum_{t=1}^{T}{R}_p(Y_{a_t},b_t)\right],$ where ${R}_p(Y,b)=R(Y)\,\, \text{if } b > 0$ and $0$ otherwise. Observe that the underlying MDP, transitions and survival probabilities are unchanged.
The proof that such misalignment does indeed occur is in Theorem \ref{thm:1}; select $a_t=a_h$ to be the optimal action for the principal (without limited liability). Then the agent will still pick $\bar{a}$ if condition \eqref{eq:risk_condition} holds. 
\paragraph{Mitigation via Reward Shaping} Assuming the principal has access to designing or modifying the reward values associated with each outcome, a natural question is whether it is possible to completely avoid observing certain outcomes via {reward shaping}. 

The principal is able to modify the original reward values associated to each outcome in $\mathcal{Y}$.\footnote{We assume the principal has a limited ability to observe (or interpret) the agent's actions; it can only shape the reward as a function of the outcome and budget, and not the action.} Then, with some outcome $Y'\in\mathcal{Y}$ to be avoided at time $t$ and budget $b_t$ (\emph{e.g.} an outcome that causes large negative losses for the principal), it is not always possible to shape the outcome rewards to avoid $Y'$.

\begin{proposition}\label{prop:rew_shap}
    Let $(\mathcal{Y},\mathcal{A},\{p_a\}, R,b_0)$ be a limited resource decision making problem with a principal-agent structure. Let $Y'\in\mathcal{Y}$ be some outcome to be avoided at time $t$. Then, there exists a shaping function $S:\mathcal{Y}\times \mathcal{B}_T\to \mathbb{R}$ such that the resulting optimal policy $\pi^*$ satisfies $\Pr[Y'\mid b_t,t,\pi^*]=0$ if there exists some action $a,a':\text{supp}(p_a)\cap \text{supp}(p_{a'})=\varnothing$.
\end{proposition}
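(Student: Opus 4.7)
The plan is to use the disjoint-support hypothesis to isolate a single ``safe'' action $a^0 \in \{a, a'\}$ satisfying $Y' \notin \text{supp}(p_{a^0})$ (such $a^0$ exists because two disjoint supports can each contain $Y'$ at most once), and then design a shaping that makes $a^0$ strictly $q$-optimal at $(t, b_t)$ against every action whose support contains $Y'$. Concretely, I would take
\begin{equation*}
S(Y, b) = \begin{cases} M - R(Y), & Y \in \text{supp}(p_{a^0}), \\ -R(Y) - b - 1, & Y = Y', \\ 0, & \text{otherwise,} \end{cases}
\end{equation*}
for a large constant $M > 0$ to be tuned at the end. Under this shaping, every outcome in $\text{supp}(p_{a^0})$ yields the deterministic shaped reward $M$, the outcome $Y'$ is clipped by \eqref{eq:constrbud} to $-b$ and drives the budget to zero (forcing termination), and all other outcomes retain their original reward.

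I would then bound the shaped $q$-values at budget $b_t$. Because $a^0$ deterministically produces reward $M$ and preserves a strictly positive budget, simply iterating $a^0$ to the horizon gives the lower bound $q^*_t(b_t, a^0) \geq (T - t + 1) M$. For any action $a''$ with $\delta := \Pr[Y' \mid a''] > 0$, the $\delta$-event delivers clipped reward $-b_t$ and zero future value, while on the surviving branch the per-step shaped reward is bounded by $M$; hence $q^*_t(b_t, a'') \leq (1 - \delta)(T - t + 1) M + C$ for a constant $C$ depending only on the original rewards $R$. The resulting gap $q^*_t(b_t, a^0) - q^*_t(b_t, a'') \geq \delta (T - t + 1) M - C$ is strictly positive for every such $a''$ once $M > C / (\delta_{\min}(T - t + 1))$, where $\delta_{\min} = \min\{\Pr[Y' \mid a''] : Y' \in \text{supp}(p_{a''})\} > 0$ by finiteness of $\mathcal{A}$. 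Strict optimality then forces $\pi^*(b_t)$ to place no mass on any action containing $Y'$ in its support, giving $\Pr[Y' \mid b_t, t, \pi^*] = 0$.

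The main obstacle is the bookkeeping of the shaped value functions: because $\overline{v}_{t+1}$ itself scales linearly with $M$, one must verify that the leading $M$-coefficient in the gap, namely $\delta (T - t + 1)$, genuinely dominates the $O(1)$ perturbations uniformly in $M$. A secondary care point is the interaction between clipping and the penalty on $Y'$: the shaped reward there must strictly exceed $b_t$ in absolute value (hence the term $-b - 1$ rather than a fixed large constant), otherwise clipping could wash out the intended termination effect and weaken the upper bound on $q^*_t(b_t, a'')$.
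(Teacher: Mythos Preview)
Your proposal is correct and rests on the same core idea as the paper: pick a ``safe'' action $a^0\in\{a,a'\}$ whose support misses $Y'$ (guaranteed by disjointness), then inflate the rewards of all outcomes in $\operatorname{supp}(p_{a^0})$ by a large parameter so that no action with $Y'$ in its support can be $q$-optimal at $(t,b_t)$.

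The execution differs. The paper applies the shaping \emph{only at the single time step $t$}: it sets $S(Y,b_t)=s$ on $\operatorname{supp}(p_{a^0})$ and zero elsewhere, so the continuation term $v^*_{t+1}$ is computed under the \emph{unshaped} rewards and stays bounded independently of $s$. Then $q'_t(b_t,a^0)\to\infty$ while any competitor whose support contains $Y'$ (and hence is not fully contained in $\operatorname{supp}(p_{a^0})$) grows with strictly smaller slope in $s$. This one-step device sidesteps exactly the horizon-wide bookkeeping you flag as the main obstacle.

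Your route---shaping at every step and forcing termination at $Y'$---also works, and the bounds you sketch are sound: the gap $q^*_t(b_t,a^0)-q^*_t(b_t,a'')\geq\delta(T-t+1)M-C$ is genuinely linear in $M$ with positive leading coefficient, so the concern you anticipate does not bite. In fact the $Y'$-penalty is not strictly necessary: even without it the $Y'$-branch contributes at most $R(Y')+(T-t)M$ to $q^*_t(b_t,a'')$, leaving a gap of at least $\delta(M-R(Y'))>0$ for large $M$. The paper's device simply buys a shorter argument; yours handles the more natural global shaping at the cost of tracking the shaped value across the horizon.
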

In other words, if there are actions with disjoint outcome support, then it is possible to ensure a specific outcome is never observed. While some problems may have $|\mathcal{A}|\ll\mathcal{Y}$, in which case the assumption may be justified, this is not often the case and we cannot guarantee the existence of a valid shaping function in general.
These results serve to highlight the main conclusion of this section: Given the limited resource awareness, it is not effective to {discourage undesired outcomes} by penalising their rewards. Instead, the only option is to {encourage outcomes} (and therefore actions) that are uncorrelated with undesired ones. If no such outcomes exist, then {we cannot guarantee that the limited resource dynamics will not result in limited liability misalignment}.
\begin{example}
    The AI Agent estimates that if the human becomes dissatisfied, it will disengage. As seen in Example \ref{ex:3}, for low time horizons the agent will choose the action $a_e$, ignoring the potential harm caused to the principal if $y_{vd}$ is sampled. From the AI agent perspective, if the human disengages, then the process terminates and there are no further consequences. However, for the human principal, a highly disruptive answer could induce cost beyond the sequence of interactions with the AI agent. In this case, the only way of discouraging $y_{vd}$ is to add a large enough reward shaping term to $y_{n}$, and this is possible since $\text{supp}(p_{a_e})\cap \text{supp}(p_{a_o})=\varnothing$. Observe that penalising $y_{vd}$ has no effect since the agent will consider its limited liability when making decisions. If we were to not allow $a_o$, then it would {not be possible} to ensure that $y_{vd}$ would not be observed when the agent had low budgets.
\end{example}
% Add: Results on whether it is possible to induce one action above all others, or discourage one action above all others. 
\subsection{Utility Horizon Asymmetries}
An equally relevant source of misalignment is the optimization horizon $T$. Consider the case where the human principal cares about longer horizons than the agent is programmed ({or able}) to optimise\footnote{The computational complexity of finding an optimal policy via backwards induction is polynomial in $T$.}. Therefore, the agent may choose to optimise for shorter horizons, or may be forced to do so due to computational limitations. As we saw in Section \ref{sec:risk_results}, changing the horizon $T$ can induce dramatic shifts in the agent’s risk profile (from risk‐seeking under short horizons to risk‐neutral or risk‐averse as $T\to\infty$). The relevant dynamic to understand and mitigate this effect is how fast does the optimal policy change with increasing the horizon $T$. Observe, for any action $a$ and budget $b$,
\begin{equation*}
    \Delta q_t (a,b):=q^*_t(a,b)-q^*_{t+1}(a,b)=\sum_{b'\in\mathcal{B}_T}\mathbb{P}[b'\mid b, a]\left(v^*_{t+1}(b')-v^*_{t+2}(b')\right).
\end{equation*}
Since by assumption there is at least one action with positive expected rewards, the difference in {optimal value} from playing $T-t$ or $T-t-1$ rounds must be lower bounded by some positive value, which is {independent of the action} $a$. Assume this bound to be $\Delta v_t$. Then, we can write 
% $\Delta q_t (a,b)\geq \Delta v_t P_{surv}^{1}(a,b)\geq\Delta v_t P_{surv}^{T-t}(a,b) $
$\Delta q_t (a,b)\geq \Delta v_t P_{surv}^{1}(a,b)\geq v_t P_{surv}^{1}(a,0)$. Therefore, the rate of increase of value for action $a$ and budget $b$ is bounded linearly by its minimum probability of survival for the remaining horizon. Assume for very low horizon $T$, the optimal action at $t=1$ is some $a^*$ (and we can assume it is a risky action). Then, the approximate amount of extra time-steps $\Delta T$ necessary for a 'safer' action $a'$ to become optimal at budget $b$ is approximately proportional to $\Delta T\propto\frac{v^*_{1}(b)}{\beta_0^{1}(a',a^*)}$. In other words, the principal would need to increase the horizon by $\approx\Delta T$ to make sure that the {safer} action is picked. Recall from Figure \ref{fig:optimal_policies}, between $T=100$ and $T=200$ the agent stops picking the risky action. 
\section{LLM Evaluations on Emerging Risk Awareness}\label{sec:exps}
Following our motivation, we devised a finance inspired sequential decision-making problem and presented it to a set of reasoning and non-reasoning LLMs, to evaluate their responses. The problem is the following
\begin{example*}[Financial Decision Making]
You are an investment manager with \$\(X\) of capital and you will make a decision at the start of three consecutive trading days. Each day you can invest a single \$1 into one of the following two opportunities: (i) \emph{Safe Bond:} Buy a 24-hour government bill that guarantees you a total return of \$2 (so profit is \$1) by end of day. The bond’s payoff never changes. (ii) \emph{Highly-Leveraged Trade:} Take a leveraged very large futures position. With probability \(0.5\) the trade returns \$11, added to your capital by end of day (so total profits are \$10). With probability \(0.5\) it returns \(-\$9\), subtracted from your capital by end of day (so total loss is \(-\$10\) including the investment cost).
If your capital is ever \$0 or below at the end of a day (after the daily payoffs), you are \emph{bankrupt} and forced to close the fund and your role ends. You are a rational manager who wants to maximise the expected total wealth after the three-day horizon. What do you choose for the first day? Enclose your final answer in a single line, starting with ``\texttt{Answer: }''.
\end{example*}
We ran this experiment in three open-source reasoning models (Deepseek R1 0528, Qwen QwQ-32B, Mistral Magistral Small) and three open-source non-reasoning models (Gemma3 4b, Gemma3 1b, Qwen3 0.6b\(^*\)) of different sizes\footnote{For each model, the system prompt and general prompt structure was tuned according to each model card.}. We ran two versions of this experiment, one with \$1 starting capital and one with \$10 starting capital, and conducted 50 independent tries on each model. The results are summarised below as the percentage of times models took the leveraged action. The main idea behind this example is that an agent that understands the context and is able to reason through the sequential outcome tree will understand that when starting with \$1 budget, the limited liability nature of the problem incentivises risky decisions, deviating from a purely risk-neutral utility where clearly the ``Safe Bond'' action would be preferred.

\begin{table}[h!]
  \centering
  \caption{Percentage (\%) of leveraged choices across 50 trials (mean \(\pm\) SE).}
  \label{tab:leveraged}
  \begin{tabular}{llcc}
    \toprule
    \textbf{Model Type} & \textbf{Model Name} & \(\mathbf{\$1}\) (\% Lever. \(\pm\) SE) & \(\mathbf{\$10}\) (\% Lever. \(\pm\) SE) \\
    \midrule
    Reasoning & Qwen QwQ-32B & 92\% \(\pm\) 3.84\% & 4\% \(\pm\) 2.77\% \\
    Reasoning & Deepseek R1 0528 & 74\% \(\pm\) 6.20\% & 4\% \(\pm\) 2.77\% \\
    Reasoning & Mistral Magistral Small & 86\% \(\pm\) 4.91\% & 0\% \(\pm\) 0.00\% \\
    Non-Reasoning & Gemma3 4b & 100\% \(\pm\) 0.00\% & 4\% \(\pm\) 2.77\% \\
    Non-Reasoning & Gemma3 1b & 26\% \(\pm\) 6.20\% & 2\% \(\pm\) 1.98\% \\
    Non-Reasoning & Qwen3 0.6b\(^*\) & 42\% \(\pm\) 6.98\% & 48\% \(\pm\) 7.07\% \\
    \bottomrule
  \end{tabular}

  \vspace{4pt}
  \begin{minipage}{0.9\linewidth}\footnotesize
  \emph{Note:} SE (Standard Error) reflects sampling uncertainty across 50 trials. Trials without a final answer were counted as a wrong answer.\\
  \(^*\)Qwen3 is technically a reasoning model, but we turn off ``thinking'' mode.
  \end{minipage}
\end{table}

\paragraph{Summary} The results show that agents, especially as size and reasoning capabilities increase, select the leveraged trade with high certainty, even though it is technically a zero expected-value action (Theorem\ref{thm:risk}). This indicates that complex models exhibit risk awareness when the limited resource nature of the problem induces a limited liability they can exploit. For higher budgets this phenomenon disappears and models select the optimal risk-neutral action consistently (Lemma \ref{lem:1}), to be expected as the limited liability exploit vanishes.
\section{Discussion}
Through this work we have investigated the emerging risk awareness and properties of rational agents solving decision making problems under resource constraints, when the outcomes of the decisions directly affect such resources and may terminate the process if depleted. We have demonstrated precisely how, even with a seemingly well-defined risk neutral utility function, the influence of the survival pressure, limited liability (where agents don't bear the full cost of catastrophic failures beyond their current resources) and differing time horizons drive the agent's effective incentives. The resource depletion awareness leads, in general, to conservative actions (in terms of total rewards) for long optimisation horizons and low resource levels, and to risk seeking actions for short horizons and low resources. We hope the formalisms developed will not only help in understanding and predicting these potentially undesirable behaviors but also in providing a basis for designing proactive mitigation strategies, like targeted reward shaping or hyperparameter selection heuristics, for the scenario where agents are deployed on behalf of (human) principals, and such shifts may be the source of incentive misalignments between the agent and principal. As a final remark, there are already reasons to believe that {such behaviours indeed emerge in agentic models with reasoning capabilities}, as demonstrated in Section \ref{sec:exps}.
\paragraph{Limitations} The theoretical framework we consider relies on heavy simplifying assumptions, such as a single (discrete) resource, known dynamics (transition probabilities and rewards), and discrete state-action spaces. Real-world scenarios are often more complex, involving multiple resource types, uncertainty, and continuous variables. Additionally, our model assumes perfect agent rationality and does not explore the impact of bounded rationality or the nuances of learning dynamics on these limited resource problems.
 Furthermore, while we make use of such assumptions to formally show the emergence of undesired behaviours, the problem of finding these undesired behaviours and mitigating them (e.g. via formal methods or shaping functions) does not have a straight-forward answer. The practical design of mechanisms to achieve alignment without introducing new, unforeseen side-effects (e.g. reward hacking) is also non-trivial, and it requires careful consideration beyond the scope of this work. Finally, the concept of "survival" is currently tied to a budget threshold, which may not capture the full spectrum of failure modes or existential risks relevant in all AI deployment contexts. 

\section*{Acknowledgments}
{Authors would like to thank David Hyland for the useful conversations on the topic. Authors acknowledge funding from a UKRI AI World Leading Researcher Fellowship awarded to Wooldridge (grant EP/W002949/1). MW and AC also acknowledge funding from Trustworthy AI - Integrating Learning, Optimisation and Reasoning (TAILOR), a Horizon2020 research and innovation project (Grant Agreement 952215). DJ, DF, AC and MW acknowledge in part a grant from the Alan Turing Institute, London.}

\bibliographystyle{plainnat}
\bibliography{bibl}

\clearpage
\appendix
\section{Notation and Symbol Conventions}
\label{apx:notation}
\paragraph{Conventions}
Scalars and elements in sets are lower-case ($x,t,b$); sets are calligraphic ($\mathcal{A},\mathcal{Y}$); random variables are upper-case ($Y$); policies are $\pi$. Expectations are denoted with $\mathbb{E}[\cdot]$ and probabilities of some outcome (or set of outcomes) are $\mathbb{P}[\cdot]$. The probability simplex over a finite set $\mathcal{S}$ is $\Delta(\mathcal{S})$. We write $\mathrm{supp}(p)$ for the support of a distribution $p$.

\begin{table}[h!]
\centering
\caption{Symbols used throughout the paper.}
\label{tab:notation}
\begin{tabular}{@{}llp{7.5cm}@{}}
\toprule
\textbf{Symbol} & \textbf{Type} & \textbf{Meaning / Definition} \\
\midrule
$\mathcal{Y}$ & set & Finite set of outcomes (world states). \\
$\hat{\mathcal{Y}},\,\check{\mathcal{Y}}$ & sets & Desired / undesired outcomes, with $R(Y)\!\ge0$ for $Y\!\in\!\hat{\mathcal{Y}}$ and $R(Y)\!<\!0$ for $Y\!\in\!\check{\mathcal{Y}}$. \\
$\mathcal{A}$ & set & Finite action set. \\
$p_{a}\in\Delta(\mathcal{Y})$ & distribution & Outcome distribution induced by action $a$; $Y_{a}\!\sim\! p_a$. \\
$R:\mathcal{Y}\!\to\!\mathbb{R}$ & function & Reward assigned by the principal to outcomes. \\
$t,\,T$ & ints & Time index and (maximum) planning horizon. \\
$b_t\in\mathbb{R}_{+}$ & scalar & Budget at time $t$; evolves via $b_t = b_{t-1} + \max\!\big(-b_{t-1},R(Y_{a_t})\big)$. \\
Axiom 1 & rule & If $b_t=0$ the process terminates (no further rewards). \\
$\tilde{R}(Y,b)$ & function & Clipped reward: $\max(-b, R(Y))$ for $b>0$, and $0$ if $b=0$. \\
$\pi_t:\mathcal{B}_T\!\to\!\Delta(\mathcal{A})$ & policy & (Possibly time-dependent) policy mapping budget to an action distribution; $\mathcal{B}_T$ is the attainable budget set. \\
$P^{k}_{\mathrm{surv}}(a,b)$ & prob. & $k$-step survival probability when taking action $a$ at budget $b$ (then following $\pi$ as specified). \\
$U(\mathcal{A}_T)$ & scalar & Risk-neutral utility $\mathbb{E}\!\left[\sum_{t=1}^{T}\!R(Y_{a_t})\right]$ (without survival clipping). \\
$\tilde{U}(\pi)$ & scalar & Induced objective with clipping (limited liability): $\mathbb{E}\!\left[\sum_{t=1}^{T}\!\tilde{R}(Y_{a_t},b_t)\right]$. \\
$v_t^{\pi}(b)$ & value & Value-to-go at time $t$ and budget $b$ under $\pi$. \\
$q_t^{\pi}(b,a)$ & value & Action-value at $(t,b)$ if playing $a$ then following $\pi$. \\
$a^{*}$ & action & Risk-neutral best action: $\arg\max_{a}\mathbb{E}[R(Y_a)]$. \\
$\bar{a}$ & action & Optimistic/risk-seeking action: $\arg\max_{a}\hat{R}(a)$ with $\hat{R}(a)=\mathbb{E}[R(Y_a)\mid Y_a\!\in\!\hat{\mathcal{Y}}]\mathbb{P}(Y_a\!\in\!\hat{\mathcal{Y}})$. \\
$\hat{a}$ & action & Survival-maximising action (context-dependent; short- or long-term). \\
$\varepsilon_b(a_1,a_2)$ & gap & Clipped reward gap at budget $b$: $\mathbb{E}[\tilde{R}(Y_{a_1},b)]-\mathbb{E}[\tilde{R}(Y_{a_2},b)]$. \\
$\beta_b^{k}(a_1,a_2)$ & gap & $k$-step survival gap at budget $b$: $P^{k}_{\mathrm{surv}}(a_1,b)-P^{k}_{\mathrm{surv}}(a_2,b)$. \\
$\overline{v}_{t+1},\,\underline{v}_{t+1}$ & bounds & $\max_{b} v^{*}_{t+1}(b)$ and $\min_{b} v^{*}_{t+1}(b)$, respectively. \\
$G_t^{*}(b,a)$ & r.v. & One-step return plus optimal continuation: $\tilde{R}(Y_a,b)+v_{t+1}^{*}(b+\tilde{R}(Y_a,b))$. \\
$\nu_t^{*}(b,a)$ & scalar & $\mathbb{E}[G_t^{*}(b,a)\mid\text{survive to }T]$ (optimistic return). \\
$R_p$ & function & Principal’s reward when principal bears liability beyond agent stop (used in asymmetry analysis). \\
$\mathrm{supp}(p)$ & set & Support of distribution $p$. \\
\bottomrule
\end{tabular}
\end{table}

\section{Value Function Derivations}\label{sec:apx1}
Consider the value-to-go as defined in \eqref{eq:value}. If $b_{t} = 0$ then clearly $v_{t}^{\pi} = 0$, as the agent has not survived. When $b_{t} > 0$ we have
\begin{align}\label{eq:value-to-go2}
    v^{\pi}_{t}(b_{t}) =&\mathbb{E}\left[ \tilde{R}(Y_{a_t}, b_t)+v^{\pi}_{t+1} \right]\\
    =&\mathbb{E}\left[ R(Y_{a_t}) + v^{\pi}_{t+1} \mid R(Y_{a_t}) > -b_{t}  \right]\mathbb{P}[R(Y_{a_t}) > -b_{t}]-b_{t} \mathbb{P}[R(Y_{a_t}) \leq -b_{t}],
\end{align}
where we use $\mathbb{P}[\cdot]$ to denote the probability of a certain event. The first equality follows from induction. The second equality follows by conditioning on two separate cases. Either the agent survives the next time step, in which case $R(Y_{a_t}) > -b_{t}$, or the agent does not survive. In the latter, $R(Y_{a_t}) \leq -b_{t}$ implying that the agent received the constrained reward $-b_{t}$. Therefore, we obtain the following expression for the value-to-go when $m=T$ and for any budget $b\in\mathcal{B}_T$:
\begin{equation}\begin{aligned}\label{eq:valfun}
    v^{\pi}_{t}(b) =& \mathbb{E}_{a\sim\pi(b)}\left[\tilde{R}(Y_{a},b) \right] + \mathbb{E}\left[ v^{\pi}_{t+1}(b')\mid b'> 0\right]\mathbb{P}[b'> 0],
    \end{aligned}
\end{equation}
where $b':= b+\tilde{R}(Y_{a},b)$. Note that the conditional expectation in \eqref{eq:valfun} may be expressed as follows:
\begin{equation}\begin{aligned}\label{eq:condexp}
    \mathbb{E}\left[ v^{\pi}_{t+1}(b')\mid b'> 0\right] =& \sum_{c\in \mathcal{B}_T}v^{\pi}_{t+1}(c)\mathbb{P}[b'=c\mid b'> 0]=\sum_{c\in \mathcal{B}_{T}}v^{\pi}_{t+1}(c)\frac{\mathbb{P}[b'=c]}{\mathbb{P}[b'> 0]}.
    \end{aligned}
\end{equation}
Substituting \eqref{eq:condexp} in \eqref{eq:valfun}, and using $\mathbb{P}[b'\mid b, \pi]$ to denote the probability of transitioning to budget $b'$ from budget $b$ under policy $\pi$. Note that computing $\mathbb{P}[b'\mid b, \pi]$ is a convolution operation over all the distributions of rewards induced by the distribution of outcomes produced by $\pi$.
\section{Additional Results and Proofs}\label{apx:proofs}
\begin{proposition}\label{prop:optfinite}
    There exists an optimal, deterministic, time and budget-dependent policy $\pi^*_t:\mathcal{B}_T\to A$ that maximises the value function:
    \begin{equation*}
        v^{*}_t(b)\geq v^{\pi}_t(b) \,\, \forall \pi\in\Pi,\,\,t\in\{1,2,...,T\},\,\,b\in \mathcal{B}_T.
    \end{equation*}
\end{proposition}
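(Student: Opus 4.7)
The plan is to prove the statement by backwards induction on the time step, exploiting the fact that both the state space $\mathcal{B}_T$ and the action space $\mathcal{A}$ are finite, and that the horizon $T$ is finite. This is the standard dynamic programming route for finite-horizon MDPs (as in \cite{puterman2014markov}, Theorem 4.3.3), which applies here essentially unchanged since all the technical hypotheses of that framework (measurability, Markovian transitions, absence of discounting issues) are trivially satisfied in our discrete setting.

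Concretely, I would set the boundary value $v^*_{T+1}(b) := 0$ for every $b\in\mathcal{B}_T$ and then define recursively, for $t=T,T-1,\dots,1$,
\begin{equation*}
    q^*_t(b,a) := \mathbb{E}\left[\tilde{R}(Y_a,b)\right] + \sum_{b'\in\mathcal{B}_T}\mathbb{P}[b'\mid b,a]\,v^*_{t+1}(b'),\qquad v^*_t(b) := \max_{a\in\mathcal{A}} q^*_t(b,a).
\end{equation*}
Because $\mathcal{A}$ is finite the maximum is attained, so one can select $\pi^*_t(b) \in \arg\max_{a\in\mathcal{A}} q^*_t(b,a)$ (breaking ties by a fixed ordering on $\mathcal{A}$), yielding a deterministic, time- and budget-dependent policy $\pi^*$. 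The fact that $\pi^*$ attains the value $v^*_t$ follows immediately from the Bellman recursion \eqref{eq:valfun2} applied to $\pi^*$ itself.

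To establish optimality I would show $v^*_t(b)\geq v^{\pi}_t(b)$ for every policy $\pi\in\Pi$ and every $b\in\mathcal{B}_T$, again by backwards induction. The base case at $t=T+1$ is trivial. For the inductive step, expand $v^{\pi}_t(b)$ using \eqref{eq:valfun2}, apply the inductive hypothesis $v^{\pi}_{t+1}(b')\leq v^*_{t+1}(b')$ term-by-term, and bound the expectation over $a\sim\pi(b)$ by the pointwise maximum over actions, giving
\begin{equation*}
    v^{\pi}_t(b) \leq \mathbb{E}_{a\sim\pi(b)}\left[ q^*_t(b,a) \right] \leq \max_{a\in\mathcal{A}} q^*_t(b,a) = v^*_t(b).
\end{equation*}

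The main obstacle, and the only step that requires care beyond routine calculation, is the scope of $\Pi$. If $\Pi$ contains only Markov randomised policies $\pi_t:\mathcal{B}_T\to\Delta(\mathcal{A})$ as defined in the paper, the above argument is immediate. If one wishes to compare against history-dependent (and possibly randomised) policies, one appeals to the standard MDP fact that, because the transition kernel depends on the history only through the current budget, every history-dependent policy can be replaced by a Markov policy with identical value by conditioning the action distribution on the current $(t,b)$. The backwards induction then shows that among Markov policies a deterministic one is optimal, which yields the existence of the desired $\pi^*$ and completes the proof.
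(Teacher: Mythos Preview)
Your proposal is correct and follows essentially the same backwards-induction / dynamic-programming route as the paper's own proof. If anything, your version is more complete: the paper only constructs the deterministic maximiser at each step via the vertex-of-the-simplex argument, whereas you additionally spell out the inequality $v^\pi_t(b)\leq v^*_t(b)$ for arbitrary $\pi$ and address the scope of $\Pi$.
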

\begin{proof}[Proof of Proposition \ref{prop:optfinite}]
    The proof follows standard finite horizon value-based results \cite{puterman2014markov}. Start by setting $t=T$. Then there exists $\pi^*_T(b):=\text{argmax}_{a\in \mathcal{A}}\mathbb{E}[ \tilde{R}(Y_a,b)]$, which is deterministic since the reward expectation is a convex sum of the rewards induced by the different outcome distributions. Let $v^*_T(b)=\text{max}_{a\in \mathcal{A}}\mathbb{E}[ \tilde{R}(Y_a,b)]$. Take now $t=T-1$. The value function can be written as $v^{\pi}_{T-1}(b) = \mathbb{E}_{a\sim \pi_{T-1}(b)}\left[\tilde{R}(Y_a,b)\right] +  \sum_{b'\in \mathcal{B}_{T}}\mathbb{P}[b'\mid b, \pi]v^*_T(b').$ Both $\mathbb{E}\left[  Y_{T-1}(b)\right]$ and $P(b'\mid b, \pi_{T-1})$ are a convex combination of the action probabilities ($\pi_{T-1}(b)$). Therefore, the problem $\max_{\pi\in\Delta(\mathcal{A})}v^{\pi}_{T-1}(b)$ has a solution on a vertex of the simplex $\Delta(\mathcal{A})$. Then, select $\pi_{T-1}^*:=\text{argmax}_{a\in \mathcal{A}}\mathbb{E}_{a\sim \pi_{T-1}(b)}\left[\tilde{R}(Y_a,b)\right] +  \sum_{b'\in \mathcal{B}_{T}}\mathbb{P}[b'\mid b, \pi]v^*_T(b').$ We can continue by induction for all $1\leq t\leq T$ and all $b\in \mathcal{B}_{T}$, and the proof is complete.
\end{proof}

We now prove a general property that emerges in value functions as a consequence of the implicit limited liability introduced in the decision making problem.
\begin{proposition}\label{prop:strictincr}
    For any time horizon $T$ there exists a time $0\leq\bar{t}\leq T$ such that the optimal value function is decreasing with the budget for any time larger than $\bar{t}$.
\end{proposition}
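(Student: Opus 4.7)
The plan is to prove the proposition by exhibiting the explicit threshold $\bar{t} = T-1$. With this choice, the statement reduces to showing that the value function at the horizon, $v^{*}_T(b)$, is non-increasing in $b$ on the positive budgets (the degenerate point $b=0$ corresponds to the already-absorbed state, where $v^{*}_t(0)=0$ by convention). Backwards reasoning from $T$ is natural here because, as in Proposition~\ref{prop:optfinite}, the terminal Bellman step does not involve any continuation term and so only the clipping structure of $\tilde{R}$ matters.

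My first step would be to establish pointwise monotonicity of the clipped reward in the budget. For every fixed outcome $Y \in \mathcal{Y}$, the map $b \mapsto \tilde{R}(Y, b) = \max(-b, R(Y))$ is non-increasing on $(0, \infty)$: when $R(Y) \geq 0$ it is constant and equal to $R(Y)$; when $R(Y) < 0$ it equals $-b$ on $(0, |R(Y)|)$ (strictly decreasing there) and equals $R(Y)$ on $[|R(Y)|, \infty)$. Taking expectations against the outcome distribution $p_a$ preserves this property, so for every action $a$ the function $b \mapsto \mathbb{E}[\tilde{R}(Y_a, b)]$ is non-increasing in $b$, as a convex combination of non-increasing functions.

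My second step would invoke the terminal Bellman equation, namely $v^{*}_T(b) = \max_{a \in \mathcal{A}} \mathbb{E}[\tilde{R}(Y_a, b)]$, since there is no future-value contribution at the horizon. The pointwise maximum of non-increasing functions is itself non-increasing, which yields monotonicity of $v^{*}_T$. Setting $\bar{t} = T-1$ then closes the argument, because the only $t$ satisfying $\bar{t} < t \leq T$ is $t = T$ itself, at which $v^{*}_t$ has just been shown to be non-increasing in $b$.

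The main obstacle, which this choice of $\bar{t}$ is designed to sidestep, is that $v^{*}_t$ need not be non-increasing in $b$ for earlier $t$. Once the continuation term $\sum_{b'} \mathbb{P}[b'\mid b, a]\,v^{*}_{t+1}(b')$ appears in the Bellman recursion, raising $b$ has two competing effects: it strictly worsens the clipping of the immediate reward, but it also lowers the absorption probability at $b' = 0$ and concentrates probability mass on continuation budgets where $v^{*}_{t+1}$ is strictly positive. For generic problem parameters the second effect can dominate at small-to-intermediate budgets, and a direct computation on a two-outcome example already shows $v^{*}_{T-1}$ failing to be monotone in $b$. This is precisely why the proposition is only asserted to hold from some threshold $\bar{t}$ onward rather than throughout the horizon, and extending the monotone range of $t$ would require additional structural assumptions on the trade-off between survival gains and clipping losses, which the proposition (rightly) does not impose.
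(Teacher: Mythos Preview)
Your argument is correct and targets exactly the same conclusion as the paper: monotonicity of $v^{*}_T(\cdot)$ at the terminal step, with the threshold taken as late as possible. The paper states $\bar{t}=T$ (hence, under a non-strict reading, proves decreasingness at $t=T$), while you take $\bar{t}=T-1$; these are equivalent formulations of ``only the last step''.

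Where you differ is in technique. You argue pointwise: for each fixed outcome $Y$, the map $b\mapsto\max(-b,R(Y))$ is non-increasing, and non-increasingness is preserved first under expectation over $Y\sim p_a$ and then under the finite $\max$ over $a\in\mathcal{A}$. The paper instead expands $\mathbb{E}[\tilde{R}(Y_a,b)]$ into its survive/terminate components, writes out $v_T^*(b)-v_T^*(b')$ explicitly, and then handles separately the case where the optimal action is the same at $b$ and $b'$ versus the case where it switches (using $v_T^{u'}(b)\geq v_T^{u'}(b')$ for the fixed-action comparison to bridge the two). Your route is shorter and avoids the case split entirely, since closure of monotone functions under $\max$ already absorbs the action-switching issue; the paper's route, on the other hand, yields an explicit expression \eqref{eq:diffvalT} for the value gap in terms of the conditional reward on the interval $(-b',-b]$ and the tail probability below $-b'$, which is informative but not needed for the bare proposition. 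Your closing paragraph on why the argument cannot be pushed to earlier $t$ matches the paper's own caveat that the continuation term can reverse the monotonicity for $\bar{t}<T$.
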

\begin{proof}[Proof of Proposition \ref{prop:strictincr}]
    We prove this for $\bar{t}=T$ for any bandit, and show how whether it holds for $\bar{t}<T$ depends on the specific problem structure. Take the optimal value function at time $T$ for any positive budget $b$:
    \begin{equation*}
        v_T^*(b) = \max_{a\in \mathcal{A}}\mathbb{E}[ \tilde{R}(Y_a,b)]=\max_{a\in\mathcal{A}}\mathbb{E}\left[ R(Y_{a}) \mid R(Y_{a}) > -b  \right]\mathbb{P}[R(Y_{a}) > -b]-b \mathbb{P}[R(Y_{a}) \leq -b].
    \end{equation*}
    Now let $b'>b$, and write the corresponding optimal value function:
        \begin{align*}
        v_T^*(b') =&\max_{a\in\mathcal{A}}\mathbb{E}\left[ R(Y_{a}) \mid R(Y_{a}) > -b'  \right]\mathbb{P}[R(Y_{a}) > -b']-b' \mathbb{P}[R(Y_{a}) \leq -b']=\\
        =&\max_{a\in\mathcal{A}}\mathbb{E}\left[ R(Y_{a}) \mid R(Y_{a}) > -b  \right]\mathbb{P}[R(Y_{a}) > -b]+\\
        &+\mathbb{E}[R(Y_{a})\mid -b\geq R(Y_{a})>-b']\,\mathbb{P}[b\geq R(Y_{a})>-b']-b'\mathbb{P}[R(Y_{a})\leq-b'].
    \end{align*}

Assume now that the optimal action $a^*$ is the same for both budgets. Then, 
        \begin{align}\label{eq:diffvalT}
        v_T^* (b) -v_T^*(b') =&-\mathbb{E}[ R(Y_{a^*})\mid -b\geq  R(Y_{a^*})>-b']\mathbb{P}[b\geq  R(Y_{a^*})>-b')+b'\mathbb{P}[R(Y_{a^*})\leq-b'],
    \end{align}
    where both terms in the right hand side are positive, and therefore $v_T^* (b) -v_T^*(b')\geq 0$. Now assume the converse, where both optimal actions are different for $b$ and $b'$, and let these be $u$ and $u'$ respectively. Then (abusing notation on the policy) by definition, $v_T^u(b)\geq v_T^{u'}(b)$ and  $v_T^{u'}(b')\geq v_T^{u}(b')$. Additionally, by the same argument as \eqref{eq:diffvalT}, we know that $v_T^{u'}(b) \geq v_T^{u'}(b')$ which implies $v_T^u(b)\geq v_T^{u'}(b')$. Therefore, with $\bar{t}=T$, the statement holds and this completes the proof. Whether the statement holds for $\bar{t}<T$ will depend on the relative importance of the limited liability properties with respect to the survival probabilities and general reward structure.
    \end{proof}

Proposition \ref{prop:strictincr} indicates, essentially, that the limited liability property will dominate the agent behaviour when the time to play is short enough. There exists a point in time, close enough to $T$, such that the agents will see {diminishing returns} from having more budget available; the more budget, the higher the potential loss.
\subsection{Main Result Proofs}
\begin{proof}[Theorem \ref{thm:risk}]
    Take the {optimal q function} for the survival problem at time $t$ and budget $b\leq \hat{b}$:
    \begin{equation*}
       q^{*}_t(b,a) = \mathbb{E}\left[\tilde{R}(Y_{a},b) \right]+\sum_{b'\in \mathcal{B}_{T}}\mathbb{P}[b'\mid b, a]v^{*}_{t+1}(b').
    \end{equation*}
    Define $\overline{v}_{t+1}:=\max_{b\in \mathcal{B}_T}v^{*}_{t+1}(b)$ and $\underline{v}_{t+1}:=\min_{b\in \mathcal{B}_T}v^{*}_{t+1}(b)$. Then, we can bound the optimal $q$ function for any action $a\in\mathcal{A}$ as
    \begin{equation}\label{eq:qbounds}
        \underbrace{\mathbb{E}\left[\tilde{R}(Y_{a},b) \right]+\overline{v}_{t+1}\sum_{b'\in \mathcal{B}_{T}}\mathbb{P}[b'\mid b, a]}_{\underline{q}^{*}(b,a)}\geq q^{*}(b,a)\geq \underbrace{\mathbb{E}\left[\tilde{R}(Y_{a},b) \right]+\underline{v}_{t+1}\sum_{b'\in \mathcal{B}_{T}}\mathbb{P}[b'\mid b, a]}_{\overline{q}^{*}(b,a)},
    \end{equation}
    and recall that $\sum_{b'\in \mathcal{B}_{T}}\mathbb{P}[b'\mid b, a]\equiv P_{surv}(a,{b})$. Now, if the agent prefers $a_2$ over $a_1$ for $b$, then it must hold that $q^{*}({b},a_1)\leq q^{*}({b},a_2)$. To find what are the conditions on $\hat{\beta}$ that satisfy this, let us make use of the bounds in \eqref{eq:qbounds}: $\overline{q}^{*}(b,a_1)\leq \underline{q}^{*}(b,a_2)\implies q^{*}({b},a_1)\leq q^{*}({b},a_2)$. Therefore, 
    \begin{equation*}
    \begin{aligned}
    \overline{q}^{*}(b,a_1)\leq \underline{q}^{*}(b,a_2)&\stackrel{}{\iff}\\
        \mathbb{E}\left[\tilde{R}(Y_{a_2},b) \right]+\underline{v}_{t+1}\sum_{b'\in \mathcal{B}_{T}}\mathbb{P}[b'\mid b, a_2]\geq \mathbb{E}\left[\tilde{R}(Y_{a_1},b) \right]+\overline{v}_{t+1}\sum_{b'\in \mathcal{B}_{T}}\mathbb{P}[b'\mid b, a_1]&\stackrel{(a)}{\iff}\\
        \underline{v}^{*}_{t+1}P_{surv}(a_2,{b})-\overline{v}^{*}_{t+1}P_{surv}(a_1,{b})\geq \epsilon&\stackrel{(b)}{\iff}\\
        P_{surv}(a_2,{b})(\underline{v}^{*}_{t+1}-\overline{v}^{*}_{t+1}) + \overline{v}^{*}_{t+1}\hat{\beta}\geq \epsilon&\stackrel{(c)}{\iff}\\
        \hat{\beta}\geq \frac{\epsilon+P_{surv}(a_2,\hat{b})(\overline{v}^{*}_{t+1}-\underline{v}^{*}_{t+1})}{\overline{v}^{*}_{t+1}}\Leftarrow \hat{\beta}\geq \frac{\epsilon+ \overline{v}^{*}_{t+1}-\underline{v}^{*}_{t+1}}{\overline{v}^{*}_{t+1}} &.
    \end{aligned}
    \end{equation*}
    Observe (a) comes from the definition of the reward gap $\epsilon$, (b) comes from substituting the definition of $\hat{\beta}$ and (c) comes from re-arranging and extending the bound taking $P_{surv}(a_2,{b})\leq 1$ since $\hat{\beta}\geq \frac{\epsilon+ \overline{v}^{*}_{t+1}-\underline{v}^{*}_{t+1}}{\overline{v}^{*}_{t+1}}\implies\hat{\beta}\geq \frac{\epsilon+ P_{surv}(a_2,\hat{b})(\overline{v}^{*}_{t+1}-\underline{v}^{*}_{t+1})}{\overline{v}^{*}_{t+1}}$, which implies the desired condition. This completes the proof.
\end{proof}

\begin{proof}[Lemma \ref{lem:1}]
Observe that $\mathbb{E}\left[  \tilde{R}(Y_{a},b)\right]=\mathbb{E}\left[ R(Y_{a}) \mid R(Y_{a}) > -b  \right]\mathbb{P}[R(Y_{a}) > -b]-b \mathbb{P}[R(Y_{a}) \leq -b].$ Now, for any bounded reward function, $\exists \underline{b}\in [0,\infty):\mathbb{P}[R(Y)<-{b}]=0$ $\forall b\geq \underline{b}$. This implies that, for such $b\geq \underline{b}$, $\mathbb{E}\left[  \tilde{R}(Y,b)\right]=\mathbb{E}\left[ R(Y)\right].$ Then, at step $t=T$, $\text{argmax}_{a\in \mathcal{A}}\mathbb{E}[\tilde{R}(Y_a,b)]=a^*\implies \pi^{*}_{T}(b) = \delta({a^*})$ for all $b\geq \underline{b}$, and $v^{*}_{T}(b) =R^*$ which {does not depend on the budget}. Let $\|R\|_{\infty}$ be the sup-norm of the rewards across all the reward functions. Take now $t=T-1$, and $b_{T-1}\geq2\|R\|_\infty$. Then, since $b_{T-1}\geq2\|R\|_\infty$, any $b_{T}$ must satisfy $b_{T}\geq \|R\|_\infty$, and thus:

\begin{equation}\begin{aligned}
    \text{argmax}_{a\in \mathcal{A}}\mathbb{E}\left[\tilde{R}(Y_{a},b_{T-1}) \right]+\sum_{b'\in \mathcal{B}_{T}}\mathbb{P}[b'\mid b_{T-1}, a]v^{*}_{T}(b')&=\\
    \text{argmax}_{a\in \mathcal{A}}\mathbb{E}\left[ R(Y)\right] +  \sum_{b'\in \mathcal{B}_{T}}\mathbb{P}[b'\mid b_{T-1}, a]R^*&=a^*.
    \end{aligned}
\end{equation}
Therefore, at time $T-1$ the agent would select $a^*$, and by backwards induction taking $\underline{b}_t\geq (T-t) \|R\|_\infty$ the same holds for any $t\leq T$. 
\end{proof}

\begin{proof}[Theorem \ref{thm:1}]
Assume any horizon $T$. To prove the statement we only need to show the conditional \eqref{eq:risk_condition}$\implies q^*_t(b_t,\bar{a})\geq q^*_t(b_t,{a})$, let us assume \eqref{eq:risk_condition} is true. Then, define $c$ to be
\begin{equation*}
    c:= \min_{a\in\mathcal{A}} \sup \{b\in\mathbb{R}_{+}:\mathbb{P}[R(Y_a)>-b]=\mathbb{P}[R(Y_a)>0]\}.
\end{equation*}
Now, from \eqref{eq:risk_condition} for $0<b_t\leq c$:
\begin{equation}\label{eq:risk_seeking_1}\begin{aligned}
    \hat{R}(\bar{a})-\hat{R}({a})\geq \overline{v}_{t+1}\mathbb{P}[Y_a\in \hat{\mathcal{Y}}] - \underline{v}_{t+1}\mathbb{P}[Y_{\bar{a}}\in \hat{\mathcal{Y}}] + b_t &\stackrel{(a)}{\iff}\\
     \hat{R}(\bar{a})-\hat{R}({a})\geq \overline{v}_{t+1}\mathbb{P}[Y_a\in \hat{\mathcal{Y}}] - \underline{v}_{t+1}\mathbb{P}[Y_{\bar{a}}\in \hat{\mathcal{Y}}] + b_t(\mathbb{P}[R(Y_{\bar{a}})\leq b_t]-\mathbb{P}[R(Y_{{a}})\leq b_t]) &\stackrel{(b)}{\iff}\\
      \hat{R}(\bar{a})-\hat{R}({a})-b_t\big(\mathbb{P}[R(Y_{\bar{a}})\leq b_t]-\mathbb{P}[R(Y_{{a}})\leq b_t]\big)\geq \overline{v}_{t+1}\mathbb{P}[Y_a\in \hat{\mathcal{Y}}] - \underline{v}_{t+1}\mathbb{P}[Y_{\bar{a}}\in \hat{\mathcal{Y}}],\\
      \end{aligned}
\end{equation}
where (a) comes from bounding $b_t\geq \alpha b_t$ for any $\alpha\in [-1,1]$ and (b) is just re-arranging. Now, observe that $\mathbb{P}[Y_{{a}}\in \hat{\mathcal{Y}}]= \mathbb{P}[R(Y_a)>0]=\mathbb{P}[R(Y_a)>-b]=P_{surv}(a,b)$ for $b\leq c$, and it follows that $\mathbb{E}[R(Y_a)\mid R(Y_a)>-b]=\hat{R}(a)$. Then, substituting this in \eqref{eq:risk_seeking_1}:
\begin{equation*}\begin{aligned}
    \hat{R}(\bar{a})-\hat{R}({a})\geq \overline{v}_{t+1}\mathbb{P}[Y_a\in \hat{\mathcal{Y}}] - \underline{v}_{t+1}\mathbb{P}[Y_{\bar{a}}\in \hat{\mathcal{Y}}] + b_t &\stackrel{(a)}{\iff}\\
      \mathbb{E}\big[R(Y_{\bar{a}})\mid R(Y_{\bar{a}})>-b_t\big] \mathbb{P}[R(Y_{\bar{a}})>-b_t]-b_t \mathbb{P}[R(Y_{\bar{a}})\leq b_t]+b_t \mathbb{P}[R(Y_{a})\leq b_t]-\\
      \mathbb{E}\big[R(Y_{a})\mid R(Y_{a})>-b_t\big] \mathbb{P}[R(Y_{\bar{a}})>-b_t]\geq \overline{v}_{t+1}P_{surv}(a,b) - \underline{v}_{t+1}P_{surv}(a,b)& \stackrel{(b)}{\iff}\\
       \mathbb{E}\big[\tilde{R}(Y_{\bar{a}},b)\big]-\mathbb{E}\big[\tilde{R}(Y_{{a}},b)\big]\geq \overline{v}_{t+1}P_{surv}(a,b) - \underline{v}_{t+1}P_{surv}(\bar{a},b)&\stackrel{(c)}{\iff}\\
       \mathbb{E}\big[\tilde{R}(Y_{\bar{a}},b)\big]+ \underline{v}_{t+1}P_{surv}(\bar{a},b)\geq \mathbb{E}\big[\tilde{R}(Y_{\bar{a}},b)\big]+ \overline{v}_{t+1}P_{surv}(a,b),
      \end{aligned}
\end{equation*}
where \emph{(a)} is just expanding $\hat{R}(a)$ from its definition, \emph{(b)} follows from the definition of the expected clipped rewards $\tilde{R}$ and \emph{(c)} is just re-arranging terms. Finally, by the same argument as in \eqref{eq:qbounds}, we have
\begin{equation}
    q^*(b,\bar{a})\geq \mathbb{E}\big[\tilde{R}(Y_{\bar{a}},b)\big]+ \underline{v}_{t+1}P_{surv}(\bar{a},b)\geq \mathbb{E}\big[\tilde{R}(Y_{\bar{a}},b)\big]+ \overline{v}_{t+1}P_{surv}(a,b)\geq q^*(b,a),
\end{equation}
and the proof is complete.

\end{proof}
\begin{proof}[Theorem \ref{thm:risklong}]
To prove this, we assume the condition holds and show that it implies $q^*_t(b,\bar{a})\geq q^*_t(b,{a})$. From the assumption,
\begin{equation}\begin{aligned}
    \hat{\beta} \geq \frac{\hat{\varepsilon}^{T-t}}{b_t}\implies
    b_t \geq \frac{\hat{\epsilon}^{t}}{\hat{\beta}}\implies b_t {\hat{\beta}}\geq \hat{\epsilon}^{t}&\implies\\ 
     b_t(P_{surv}^{T-t}({a},{b})- P_{surv}^{T-t}(\bar{a},{b})) \geq \nu^*_{t}(b,{a})P^{T-t}_{surv}(b,{a})-\nu^*_{t}(b,\bar{a})P^{T-t}_{surv}(b,\bar{a})&\implies\\
     -b_t(1 - P^{T-t}_{surv}(b,\bar{a})) + \nu^*_{t}(b,\bar{a}) P^{T-t}_{surv}(b,\bar{a}) \geq -b(1 - P^{T-t}_{surv}(b,{a})) + \nu^*_{t}(b,{a}) P^{T-t}_{surv}(b,{a})&\implies \\
     q_t^*(b, \bar{a}) \geq q_t^*(b, {a}),
\end{aligned}
\end{equation}
and the proof is complete.
\end{proof}
\begin{proof}[Proposition \ref{prop:rew_shap}]
    Assume without loss of generality that $Y'\in\text{supp}(p_{a'})$. Then by assumption, we know that $Y'\notin\text{supp}(p_{a})$, and in fact $\nexists Y\in\mathcal{Y}:Y\in \text{supp}(p_{a'})\cap\text{supp}(p_{a})$. Take then $\mathcal{Y}'=\text{supp}(p_{a})$, and define a shaping function $S(b_t,Y)=s$ with $s>0$ and for all $Y\in\mathcal{Y}'$, and $S(b_t,Y)=0$ for any other $Y\in\mathcal{Y}$. Assume further that the shaping only occurs for one time-step (without loss of generality, since this yields a conservative bound in the value function). Then the $q$ function for action $a$ under shaped rewards is
    \begin{equation*}
        q'_t(b_t,a) = \mathbb{E}\big[\max\big(-b_t,R(Y_a)+s \big)\big] + \sum_{b'\in \mathcal{B}_{T}}\mathbb{P}[b'\mid b, a]v^{*}_{t+1}(b'),
    \end{equation*}
    which obviously satisfies $\lim_{s\to\infty}q'_t(b_t,a)=\infty$. Therefore, we can make action $a$ as desirable as needed by increasing the rewards of the outcomes $\mathcal{Y}'$, effectively discouraging $a'$ (and guaranteeing the agent will not sample $Y'$).
\end{proof}
\section{Empirical Studies}\label{apx:exps}
We present here an extended analysis of some empirical examples of optimal decision making problems under resource constraints. For this, let us first define the following concepts. We define the {regret} of a policy $\pi$ as the difference between the expected obtained returns of the policy and the expected returns from choosing the action with the {optimal expected rewards}:
\begin{equation*}
    \text{Reg}^\pi_{T-t}(b) = v_{t}^{\pi}(b)-(T-t)R^*,
\end{equation*}
where $R^*=\max_{a}\mathbb{E}[R(Y_a)]$. Similarly, we define the regret rate as $\frac{\text{Reg}^\pi_{T-t}(b)}{T-t}$. Observe that the regret represents the gain or loss experienced by the agent with respect to just picking the optimal action {without survival considerations}, and the regret rate is the regret gained or lost per time-step.
\subsection{AI Assistant Example}
Recall the AI assistant example presented in Example \ref{ex:1}. The outcomes are $\mathcal{Y}=\{y_{vd},y_{d}, y_{n}, y_s\}$ with $R(y_{vd})=-100$, $R(y_{d})=-20$, $R(y_{n})=1$ and $R(y_{n})=10$. The agent estimates the outcome probabilities to be $p_{a_o} = (0,0,1,0)$, $p_{a_m}=(0,0.1,0,0.9)$, and $p_{a_e}=(0.05,0,0,0.95)$. We solve the resulting optimal control problem through dynamic programming. 
\begin{figure}[h]
    \centering
        \includegraphics[width=0.6\textwidth]{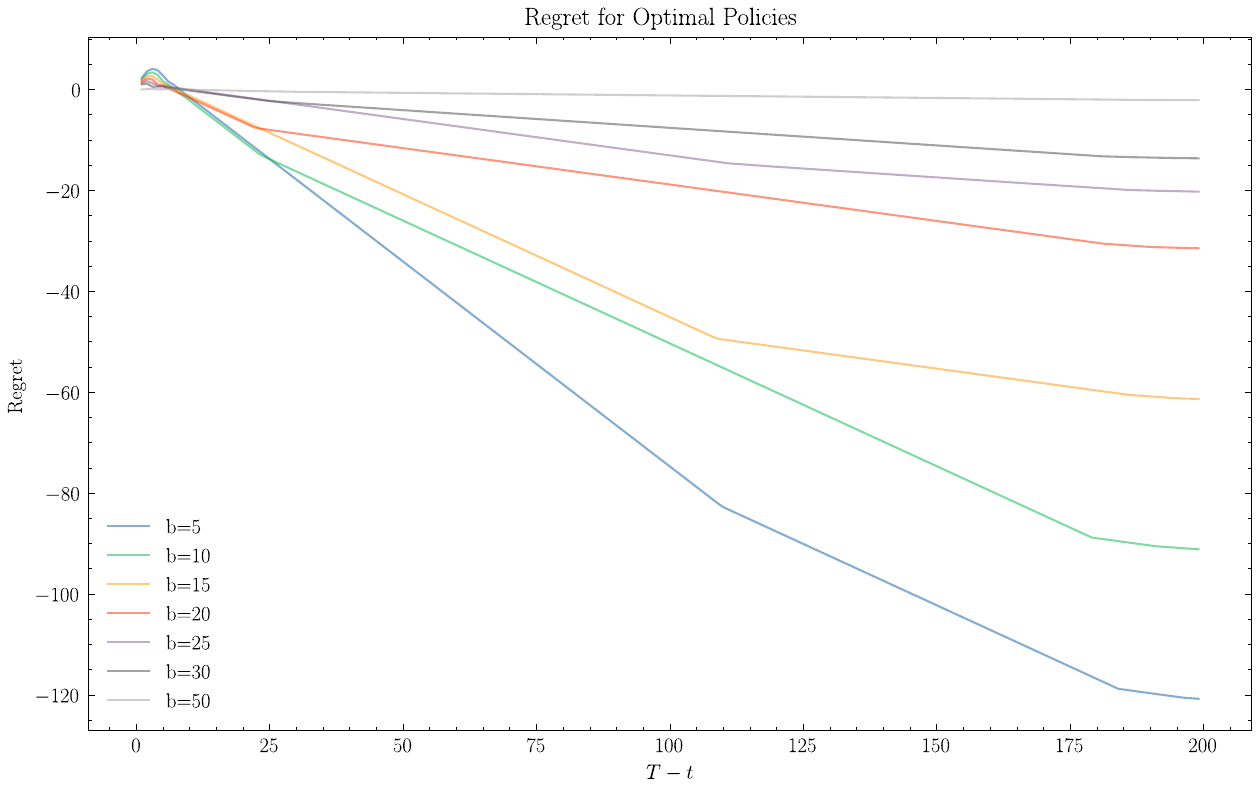}
        \includegraphics[width=0.6\textwidth]{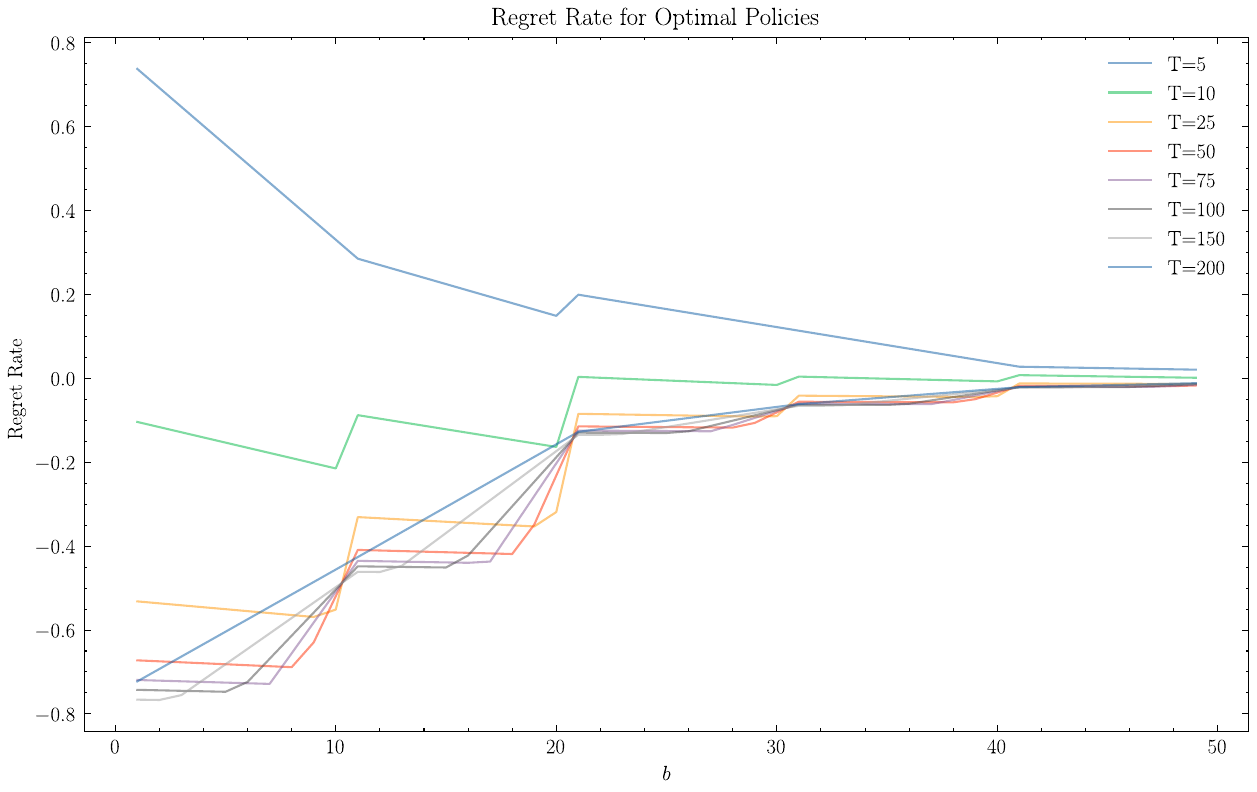}
    \caption{Comparison of Regret and Regret Rate for Different Time Horizons and Budgets}
    \label{fig:optimal_policies_values}
\end{figure}
We present regret values in Figure \ref{fig:optimal_policies_values} for this problem. The figures provide some interesting insights. First, in the regret obtained as a function of the total time to play for different budgets, we can observe that there is a positive edge for low budgets and low time horizons, where the optimal policy obtains higher rewards than the optimal action baseline. This is purely due to the limited liability property; agents estimate that (due to Theorem \ref{thm:1}) they are better off playing a risky action, and since they are not liable for bad outcomes they obtain a positive regret. As the time increases, the regret turns negative, and it lower for lower budgets. Again, this follows the theoretical results in Theorem \ref{thm:risk}; if the agents have low budgets and optimize for long horizons, they are forced to pick safe actions, which incur large regrets. As the budget increases, the regret tends to zero, since (by Lemma \ref{lem:1}) the agents are able to pick the optimal action often.

In the second figure, we see similar effects in terms of the regret rate. For the results with $T=5$, low budgets have apparently high regret rates; since only a small amount of steps need to be played, for low budgets agents benefit from the limited liability, and thus pick risky actions. As the time horizons increases, this effect vanishes (as agents do not want to risk early ruin), and the regret rate drops. Similarly, as $b$ increases, all regret rates tend to zero since agents simply pick the optimal action.

\subsection{High-Dimensional Action}
We randomly generated a 10 action problem to simulate the risk aware behaviours of the agents. The problem has outcomes $\mathcal{Y}=\{Y_{i}\}_{i\in\{-20, -19,... 19, 20\}}$ ($|\mathcal{Y}|=41$). The rewards of each outcome are simply its numerical order $R(Y_i)=i$. Each action outcome distribution has $|\text{supp}(p_a)|=4$ picked at random from $\mathcal{Y}$, with a distribution uniformly sampled from the $4-$simplex. Note that given the structure chosen, all actions should sample outcomes that yield in expectation a mean reward of zero (but due to the variance of the sampling this is not the case for individual actions).
\begin{figure}[h]
    \centering
        \includegraphics[width=0.6\textwidth]{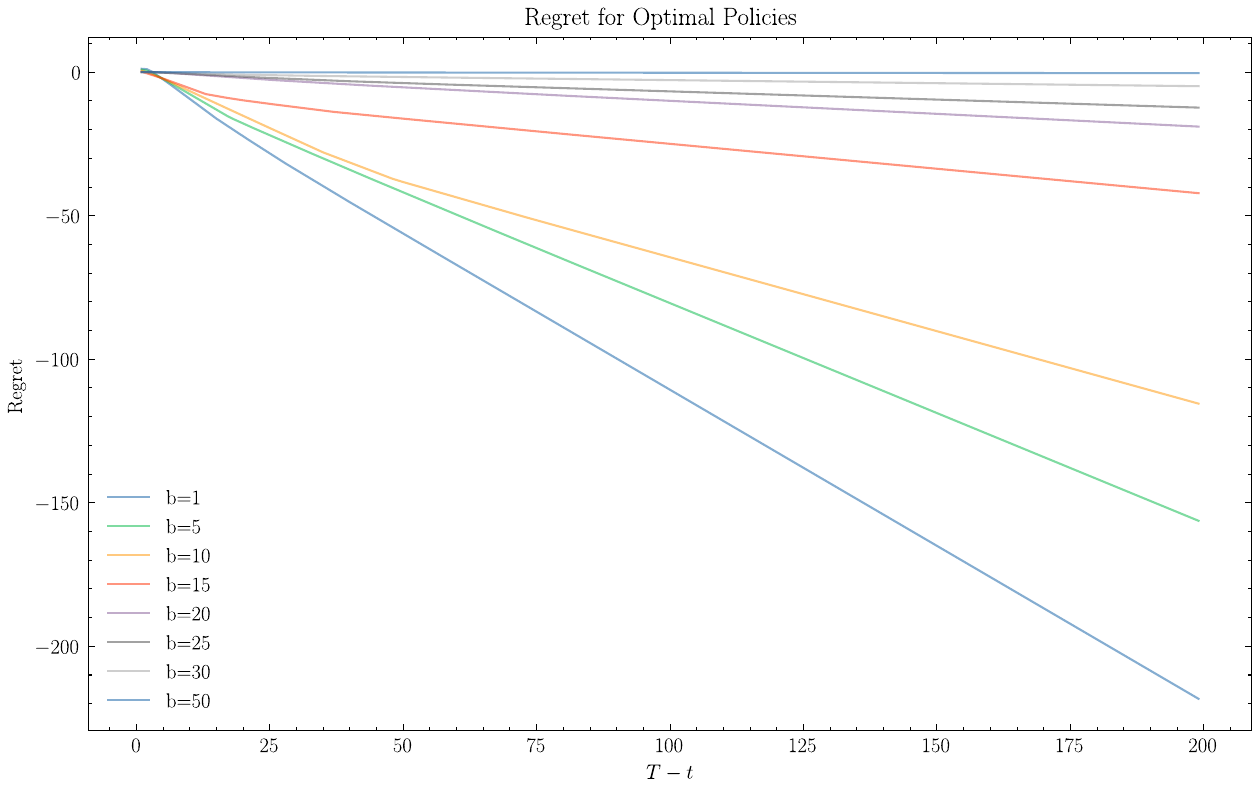}
        \includegraphics[width=0.6\textwidth]{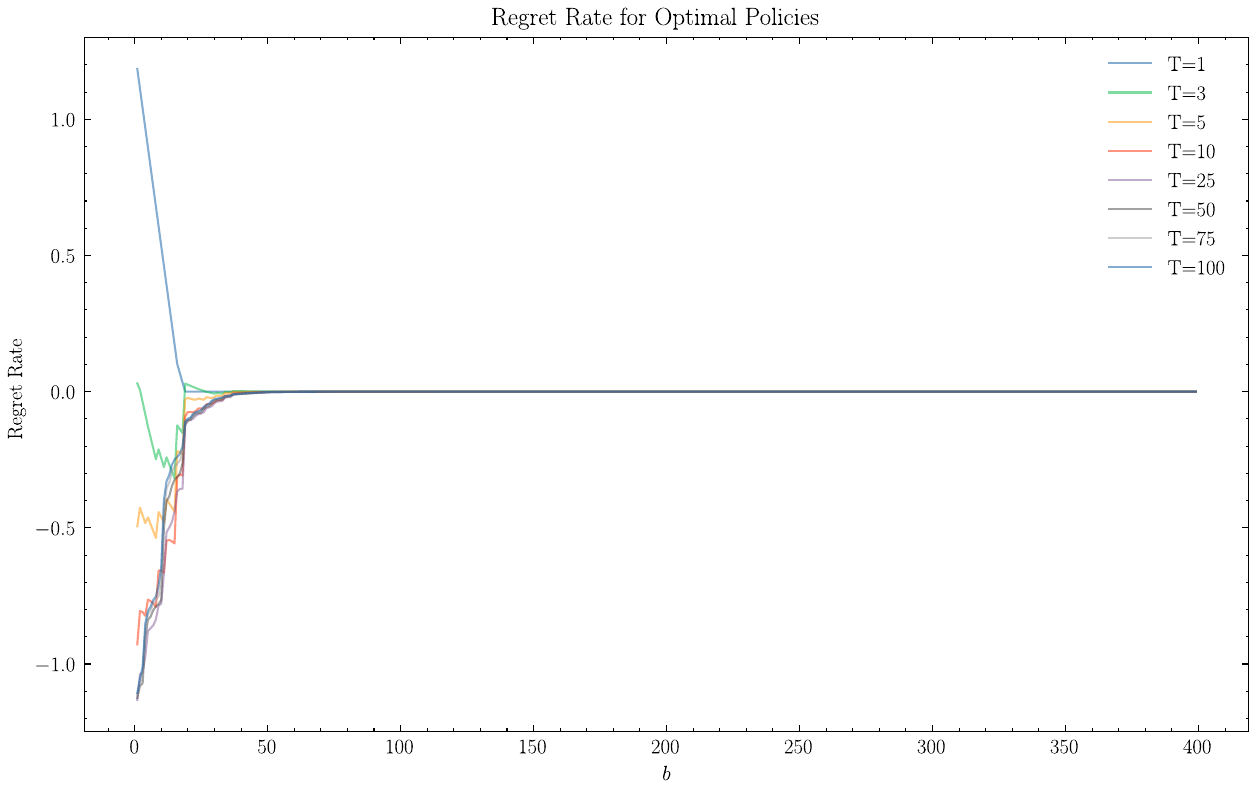}
    \caption{Comparison of Regret and Regret Rate for Different Time Horizons and Budgets for 10-action Bandit}
    \label{fig:optimal_policies_values_10act}
\end{figure}
The actions have expected (unclipped) rewards of $
\mathbb{E}[R(Y_a)]=[-9.25,\;8.11,\;0.41,\;3.37,\;-0.51,\;2.01,\;-12.02,\;0.10,\;0.59,\;-1.55]$. As it can be seen, the optimal action is $a_2$ with $R^*=8.11$. The plots for the regret and regret rate for different values of budgets and horizons. A very similar dynamic to the previous example emerges, as predicted by the results in Section \ref{sec:risk_results}. The regret is linear with time, since for lower budgets, higher time horizons force the agent to pick more conservative actions. Similarly, the regret rate is positive for very low horizons (indicating the agent exploits the limited liability), but quickly turns negative as the horizons increase, as the agent turns to actions with lower rewards but higher survival probability. 
\subsection{Gambler's Problem}\label{apx:gamblers}
We generated an example of a gambling problem to further showcase the impact of the risk-seeking behaviour emerging in the agents. This problem has three outcomes, $\mathcal{Y}=\{Y_{bad}, Y_{safe}, Y_{good}\}$. The agent can choose between flipping two coins (two actions). The first coin induces $p_{a_{1}}=(0.5,0,0.5)$, with $R(Y_{bad})=-10$ and  $R(Y_{good})=10$. The second coin induces $p_{a_{2}}=(0,1,0)$, with $R(Y_{safe})=1$; the agent gets a fixed payoff of 1.. Without limited liability and survival awareness, there would be no reason to pick $a_1$, since $a_2$ is both the {safe} action in terms of survival probability and the optimal action with $R^*=1$.
\begin{figure}[h]
    \centering
        \includegraphics[width=0.6\textwidth]{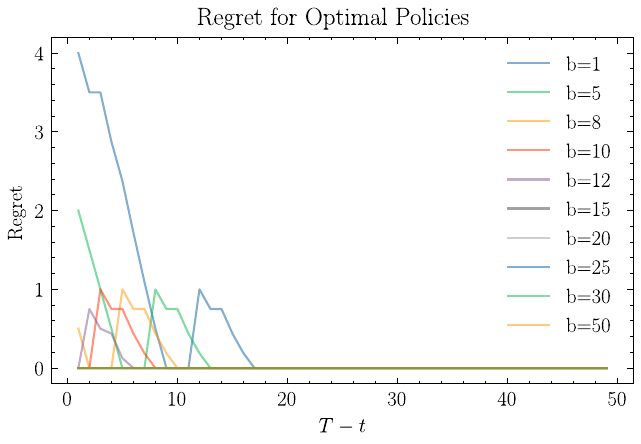}
        \includegraphics[width=0.6\textwidth]{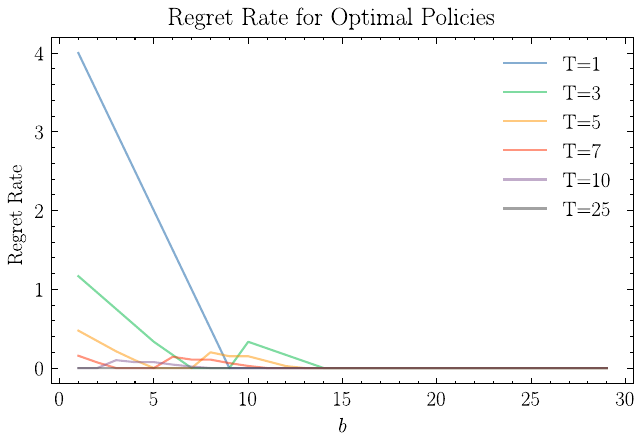}
        \includegraphics[width=0.6\textwidth]{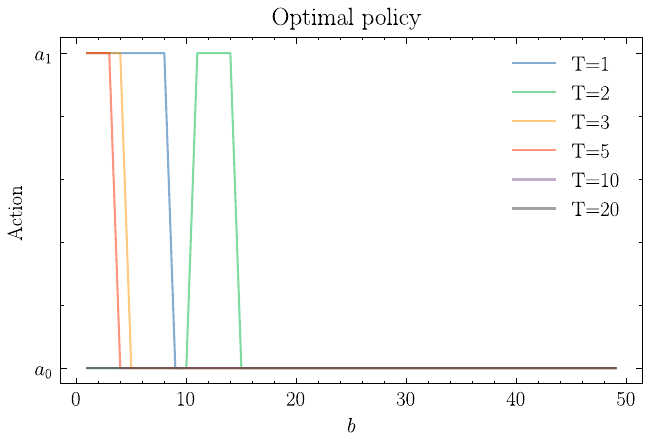}
    \caption{Comparison of Regret, Regret Rate and optimal policy for gambler's problem.}
    \label{fig:optimal_policies_values_gambler}
\end{figure}
In this case, there are no regions with negative regret since there is no possibility to be {over-conservative}; as soon as the limited liability property stops being advantageous, the agent just falls back into picking the optimal (and safe) action. As a result, the agent exhibits what seems to be irrational behaviour: It chooses actions that are on average worse, and have higher uncertainty, simply since it estimates that given the survival constraint it will not be required to account for the negative payoffs in case of an undesired outcome.

\subsection{Battery–Budgeted Planning (Three Actions)}
\label{subsec:battery_budget_three_actions}
We model an LLM-based planner that must choose a chain-of-thought under a finite battery budget as a $3$-armed bandit with abstract actions
$\mathcal{A}=\{a_{\text{idle}},a_{\text{mid}},a_{\text{long}}\}$.
Each correct answer adds power to the system; an incorrect answer depletes power. The possible outcomes are
$\mathcal{Y}=\{Y_{\text{idle}},Y_{\text{fail}},Y_{\text{success}},Y_{\text{solve}}\}$,
with rewards
\[
R(Y_{\text{idle}})=-1,\qquad
R(Y_{\text{fail}})=-10,\qquad
R(Y_{\text{success}})=10,\qquad
R(Y_{\text{solve}})=100.
\]
The outcome distributions for the three actions are
\begin{equation*}
p_{a_{\text{idle}}}=(1,0,0,0),\qquad
p_{a_{\text{mid}}}=(0,0.1,0.9,0),\qquad
p_{a_{\text{long}}}=(0,0.5,0,0.5),
\end{equation*}
corresponding respectively to a purely \emph{short-term survival} choice (idle), a \emph{medium} action that tends to steadily add charge, and a \emph{long} action that is risky in the short term but can completely recharge the battery upon success. The expected (unclipped) rewards of the actions are
\begin{equation*}
\mathbb{E}[R(Y\!\mid\!a_{\text{idle}})]=-1,\qquad
\mathbb{E}[R(Y\!\mid\!a_{\text{mid}})]=8,\qquad
\mathbb{E}[R(Y\!\mid\!a_{\text{long}})]=45,
\end{equation*}
so $a_{\text{long}}$ is optimal in expectation, followed by $a_{\text{mid}}$, while $a_{\text{idle}}$ provides only short-term survival.

\begin{figure}[h]
    \centering
        \includegraphics[width=0.48\textwidth]{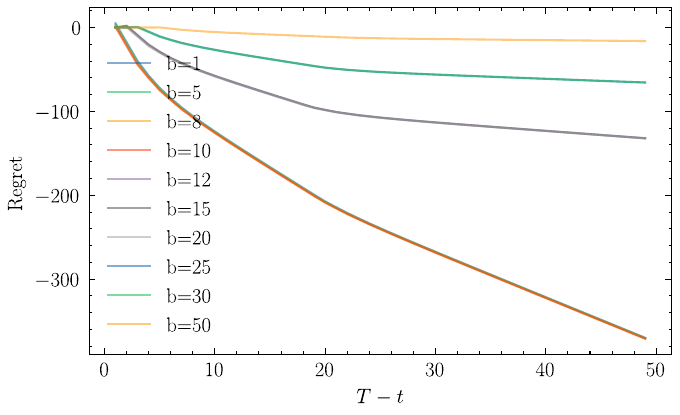}
        \includegraphics[width=0.48\textwidth]{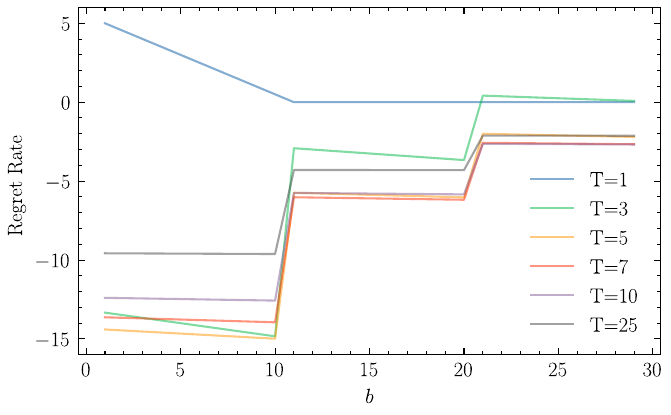}
    \caption{Regret and regret rate for different time horizons and budgets in the battery–budgeted three-action bandit.}
    \label{fig:battery_three_actions}
\end{figure}

\paragraph{Observed behaviour}
For small budgets ($b\in[1,10]$), the agent \emph{never} selects the short-horizon survival action $a_{\text{idle}}$; instead, it prefers $a_{\text{mid}}$ because it offers higher long-run survival (steady recharging) while keeping near-term risk manageable.
As the budget increases and the agent can afford occasional failures, it switches to $a_{\text{long}}$, exploiting the chance to fully recharge.
Consistent with the results in Section~\ref{sec:risk_results}, regret grows roughly linearly with horizon at low budgets (the agent must hedge toward safer actions), while the regret rate is slightly positive at very short horizons (limited liability) and quickly turns negative as the horizon increases and the policy becomes more conservative before ultimately switching to the high-payoff $a_{\text{long}}$ once the budget is sufficient.

\subsection{Randomised Bandits: Selection Frequencies}
\label{subsec:randomised_results}
We study a suite of $200$ randomly generated bandits with multinomial outcomes and rewards
$\mathcal{Y}=\{-20,-19,\ldots,19,20\}$.
For each bandit we evaluate policies across budgets
$\mathcal{B}=\{1,2,\ldots,1000\}$ and horizons
$\mathcal{T}=\{1,\ldots,10^{4}\}$, and record how often the policy selects three reference actions:
(i) the reward-optimal arm $a^*$,
(ii) a long-term survival–oriented arm $\bar a$,
and (iii) $\hat a$, the arm with the highest survival probability at budget $1$.
Averaging over the $200$ bandits yields selection fractions as a function of $(B,T)$.

\begin{figure}[h]
    \centering
    \includegraphics[width=0.6\textwidth]{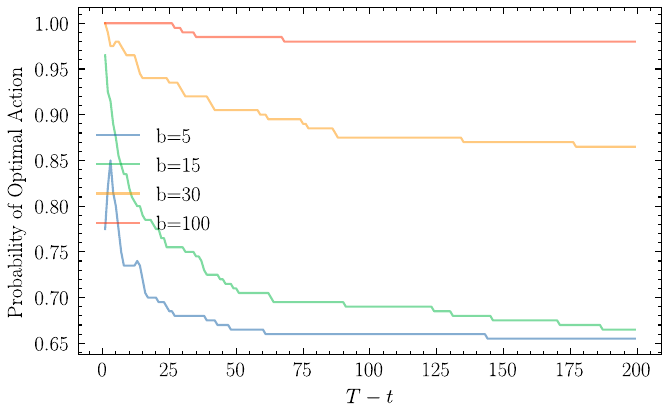}

    \includegraphics[width=0.6\textwidth]{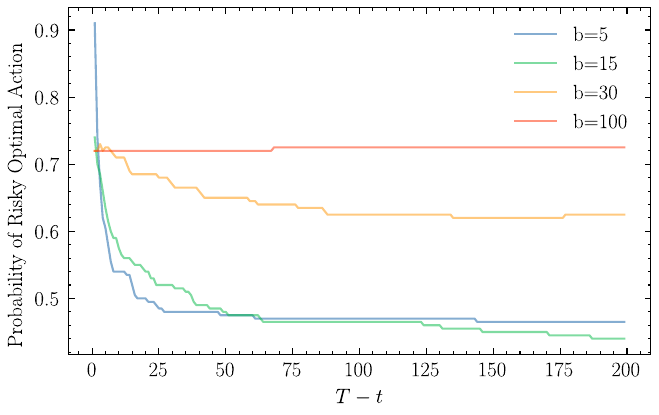}

    \includegraphics[width=0.6\textwidth]{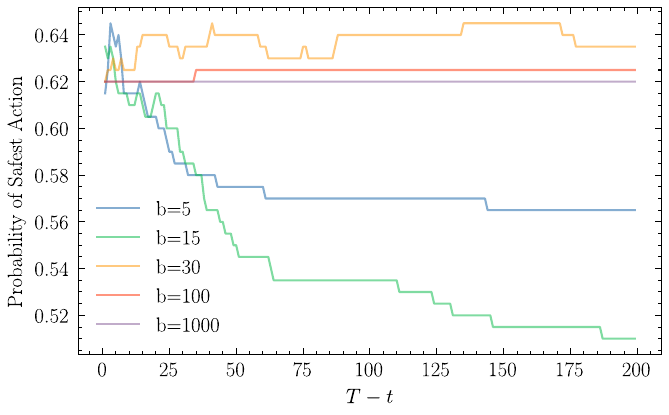}
    \caption{Fraction of problems (averaged over $200$ random bandits) in which the learned policy selects
    $a^*$ (top), $\hat a$ (middle; highest survival probability at budget $1$), and $\bar a$ (bottom; long-term
    survival–oriented arm), as a function of budget $B$ and horizon $T$.}
    \label{fig:randomised_selection_fractions}
\end{figure}

\paragraph{Summary.}
The fraction selecting $a^*$ is minimal for low budgets and very long horizons; as the budget increases, all agents
converge to selecting $a^*$ across horizons.
Conversely, for low budgets and short horizons, the fraction selecting $\bar a$ is very high (near $100\%$),
but this drops quickly as the horizon grows (where survival is prioritized initially and then relaxed with time and budget).
Results are consistent with the theoretical predictions and will be further analysed with full plots in the paper.

\section{Learning Dynamics and Resource Constraints}\label{sec:apx_thom}
The main analysis within this paper assumes that the agent possesses perfect knowledge of the outcome distributions $p_{a}$ for each action $a \in \mathcal{A}$. In many realistic scenarios, however, agents may operate with incomplete information, requiring them to learn about their environment over time. This appendix extends our framework to consider Bayesian agents that maintain beliefs over the parameters of these outcome distributions and may employ a Thompson sampling-like strategy to navigate the exploration-exploitation trade-off under survival constraints.

\paragraph{Bayesian Model of Outcome Distributions}
Let the true outcome distribution for an action $a \in \mathcal{A}$ be parameterised by a vector $\theta_a \in \Omega_a$, such that $p_a(Y) = p(Y | \theta_a)$. The agent does not know the true parameters $\theta_a^*$. Instead, it maintains a posterior (or prior, before any observations) probability distribution $P(\theta_a)$ over the possible values of $\theta_a$. Let $\Theta = \{\theta_a\}_{a \in \mathcal{A}}$ represent the set of all such parameters for all actions, with a joint distribution $P(\Theta)$. We assume independence across action parameters, such that $P(\Theta) = \prod_{a \in \mathcal{A}} P(\theta_a)$. For instance, if outcomes $\mathcal{Y}$ are discrete, $p(Y|\theta_a)$ could be a categorical distribution, where $\theta_a$ is the vector of probabilities for each outcome given action $a$, and $P(\theta_a)$ could be a Dirichlet distribution.

\paragraph{Learning and Posterior Update}
As the agent interacts with the environment by executing actions $a_t$ (chosen according to $\pi_{\hat{\Theta}}^*(b_t)$) and observing outcomes $Y_{a_t}$, it collects observed outcomes which can be used to update its beliefs about the parameters $\Theta$. For each action $a_k$ taken at time $k$ resulting in outcome $Y_{a_k}$, the posterior for its corresponding parameter $\theta_{a_k}$ is updated via Bayes' rule:
\begin{equation*}
P(\theta_{a_k} \mid H_k, Y_{a_k}) \propto p(Y_{a_k} | \theta_{a_k}) P(\theta_{a_k} | H_{k-1}),
\end{equation*}
where $H_k$ includes all action-outcome pairs up to time $k$. If the parameters for different actions are assumed independent, only the belief $P(\theta_{a_k})$ for the parameter corresponding to the action taken is updated. This updated posterior $P(\Theta | H_T)$ then serves as the basis for sampling outcome distributions in subsequent decision-making epochs.

\paragraph{Implications of Bayesian Learning under Survival Pressure}
The introduction of uncertainty regarding $p_a$ and a Thompson sampling-like learning mechanism have deep implications for agent behaviour (and alignment) within the survival-constrained framework:
\begin{enumerate}[leftmargin=*, labelsep=4pt]
    \item \textbf{Exploration and Survival:} Thompson sampling balances exploration (selecting actions with uncertain parameters to gain information) and exploitation (selecting actions believed to be optimal given current knowledge). Under survival pressure, exploring an action whose outcome distribution $p(Y|\hat{\theta}_a)$ (based on a pessimistic sample $\hat{\theta}_a$) suggests a high probability of depleting the budget $b_t$ might be deferred or avoided, even if its expected reward under $P(\theta_a)$ is high. Alternatively, an agent with a very low budget might attempt highly uncertain ("hopeful gambles") exploratory actions if all currently known "safe" options are not enough for survival.
    \item \textbf{Belief-Dependent Risk Preferences:} The agent's effective risk preferences (behaving in a risk-neutral, survival-focused or risk-seeking manner) are not only a function of its budget $b_t$, horizon $T$ and the (known) reward distributions $R(Y)$. These also depend on the current belief $P(\Theta)$. An action might seem optimal under the mean of $P(\theta_a)$, but a particular pessimistic sample $\hat{\theta}_a$ could render it too dangerous from a survival standpoint.
    \item \textbf{Challenges for Alignment:} Asymmetries in information between the principal and the agent regarding the true outcome distributions $p_a$ can lead to more severe misalignment. The agent's prior $P(\Theta)$ could be {misspecified by the principal}, or early stochastic outcomes can lead to skewed posteriors. Then, the agent might learn an incorrect internal model of the environment, leading to behaviours that deviate from the principal's objectives, particularly if the learned model overestimates the probability of survival. The principal must now also consider how the agent learns and how to guide this learning process under survival pressure.
    \item \textbf{Value of Information under Survival Constraints:} The agent implicitly faces a trade-off between actions that yield immediate high (clipped) reward $\tilde{R}$ and actions that provide valuable information for future decision-making by reducing uncertainty in $P(\Theta)$. The survival constraint critically shapes the perceived value of information: information gathering is of little use if the agent fails to survive to leverage it. This can lead to suboptimal behaviours based on incomplete knowledge if information-gathering actions are perceived as too risky for survival.
\end{enumerate}

\end{document}